\newcommand{\blind}{0}
\numberwithin{equation}{section}
\theoremstyle{plain}
\newcommand{\href}[2]{{#2}}
\newcommand{\Sec}[1]{\hyperref[sec:#1]{\S\ref*{sec:#1}}} 
\newcommand{\App}[1]{\hyperref[sec:#1]{Appendix~\ref*{sec:#1}}} 
\newcommand{\Eqn}[1]{\hyperref[eq:#1]{{\rm (\ref*{eq:#1})}}} 
\newcommand{\Part}[1]{\hyperref[part:#1]{(\ref*{part:#1})}} 
\newcommand{\Fig}[1]{\hyperref[fig:#1]{Figure~\ref*{fig:#1}}} 
\newcommand{\Tab}[1]{\hyperref[tab:#1]{Table~\ref*{tab:#1}}} 
\newcommand{\Thm}[1]{\hyperref[thm:#1]{Theorem~\ref*{thm:#1}}} 
\newcommand{\Lem}[1]{\hyperref[lem:#1]{Lemma~\ref*{lem:#1}}} 
\newcommand{\Prop}[1]{\hyperref[prop:#1]{Proposition~\ref*{prop:#1}}} 
\newcommand{\Cor}[1]{\hyperref[cor:#1]{Corollary~\ref*{cor:#1}}} 
\newcommand{\Def}[1]{\hyperref[def:#1]{Definition~\ref*{def:#1}}} 
\newcommand{\Alg}[1]{\hyperref[alg:#1]{Algorithm~\ref*{alg:#1}}} 
\newcommand{\Ex}[1]{\hyperref[ex:#1]{Example~\ref*{ex:#1}}} 
\newcommand{\As}[1]{\hyperref[as:#1]{Assumption~{\rm\ref*{as:#1}}}} 
\newcommand{\Reg}[1]{\hyperref[as:#1]{Condition~\ref*{reg:#1}}} 
\newcommand{\AlgLine}[2]{\hyperref[alg:#1]{line~\ref*{line:#2} of Algorithm~\ref*{alg:#1}}}
\newcommand{\AlgLines}[3]{\hyperref[alg:#1]{lines~\ref*{line:#2}--\ref*{line:#3} of Algorithm~\ref*{alg:#1}}}
\newcommand{\Sec}[1]{{\S\ref{sec:#1}}} 
\newcommand{\App}[1]{{Appendix~\ref{sec:#1}}} 
\newcommand{\Eqn}[1]{{(\ref{eq:#1})}} 
\newcommand{\Part}[1]{{(\ref{part:#1})}} 
\newcommand{\Fig}[1]{{Figure~\ref{fig:#1}}} 
\newcommand{\Tab}[1]{{Table~\ref{tab:#1}}} 
\newcommand{\Thm}[1]{{Theorem~\ref{thm:#1}}} 
\newcommand{\Lem}[1]{{Lemma~\ref{lem:#1}}} 
\newcommand{\Prop}[1]{{Property~\ref{prop:#1}}} 
\newcommand{\Cor}[1]{{Corollary~\ref{cor:#1}}} 
\newcommand{\Def}[1]{{Definition~\ref{def:#1}}} 
\newcommand{\Alg}[1]{{Algorithm~\ref{alg:#1}}} 
\newcommand{\Ex}[1]{{Example~\ref{ex:#1}}} 
\newcommand{\As}[1]{{Assumption~\ref{as:#1}}} 
\newcommand{\Reg}[1]{{R~\ref{reg:#1}}} 
\newcommand{\AlgLine}[2]{{line~\ref{line:#2} of Algorithm~\ref{alg:#1}}}
\newcommand{\AlgLines}[3]{{lines~\ref{line:#2}--\ref{line:#3} of Algorithm~\ref{alg:#1}}}
\newtheorem{assumption}{Assumption}[section]
\newtheorem{proposition}{Proposition}[section]
\newcommand{\qtext}[1]{\quad\text{#1}\quad}
\newcommand{\prox}{\mathop{\rm prox}\nolimits}
\newcommand{\amp}{\mathop{\:\:\,}\nolimits}
\newcommand{\bc}{\bm{\mathbf{c}}}
\newcommand{\be}{\bm{\mathbf{e}}}
\newcommand{\bg}{\bm{\mathbf{g}}}
\newcommand{\br}{\bm{\mathbf{r}}}
\newcommand{\bu}{\bm{\mathbf{u}}}
\newcommand{\bv}{\bm{\mathbf{v}}}
\newcommand{\btildev}{\bm{\tilde{\mathbf{v}}}}
\newcommand{\bw}{\bm{\mathbf{w}}}
\newcommand{\bx}{\bm{\mathbf{x}}}
\newcommand{\by}{\bm{\mathbf{y}}}
\newcommand{\bz}{\bm{\mathbf{z}}}
\newcommand{\bA}{\bm{\mathbf{A}}}
\newcommand{\bB}{\bm{\mathbf{B}}}
\newcommand{\bI}{\bm{\mathbf{I}}}
\newcommand{\bL}{\bm{\mathbf{L}}}
\newcommand{\bM}{\bm{\mathbf{M}}}
\newcommand{\bS}{\bm{\mathbf{S}}}
\newcommand{\bT}{\bm{\mathbf{T}}}
\newcommand{\bU}{\bm{\mathbf{U}}}
\newcommand{\bV}{\bm{\mathbf{V}}}
\newcommand{\btildeU}{\bm{\tilde{\mathbf{U}}}}
\newcommand{\btildeV}{\bm{\tilde{\mathbf{V}}}}
\newcommand{\bX}{\bm{\mathbf{X}}}
\newcommand{\bY}{\bm{\mathbf{Y}}}
\newcommand{\bZ}{\bm{\mathbf{Z}}}
\newcommand{\bOne}{\bm{\mathbf{1}}}
\newcommand{\bZero}{\bm{\mathbf{0}}}
\newcommand{\blambda}{\bm{\mathbf{\lambda}}}
\newcommand{\btildelambda}{\bm{\tilde{\mathbf{\lambda}}}}
\newcommand{\bDelta}{\bm{\mathbf{\Delta}}}
\newcommand{\bLambda}{\bm{\mathbf{\Lambda}}}
\newcommand{\bPhi}{\bm{\mathbf{\Phi}}}
\newcommand{\Kron}{\otimes} 
\newcommand{\Real}{\mathbb{R}}
\begin{document}

\def\spacingset#1{\renewcommand{\baselinestretch}%
{#1}\small\normalsize} \spacingset{1}


\if0\blind
{
  \title{\bf Splitting Methods for Convex Clustering}
  \author{Eric C. Chi\thanks{
    Eric C. Chi (E-mail: echi@rice.edu) is Postdoctoral Research Associate, Department of Electrical and Computer Engineering, Rice University, Houston, TX 77005.} \,and
    Kenneth Lange\thanks{
    Kenneth Lange (E-mail: klange@ucla.edu) is Professor, Departments of Biomathematics, Human Genetics, and Statistics, University of California, Los Angeles, CA 90095-7088.}\\}
    \date{}
  \maketitle
} \fi

\if1\blind
{
  \bigskip
  \bigskip
  \bigskip
  \begin{center}
    {\LARGE\bf Title}
\end{center}
  \medskip
} \fi

\bigskip
\begin{abstract}
Clustering is a fundamental problem in many scientific applications. Standard methods such as $k$-means, Gaussian mixture models, and hierarchical clustering, however, are beset by local minima, which are sometimes drastically suboptimal. Recently introduced convex relaxations of $k$-means and hierarchical clustering shrink cluster centroids toward one another and ensure a unique global minimizer. In this work we present two splitting methods for solving the convex clustering problem. The first is an instance of the alternating direction method of multipliers (ADMM); the second is an instance of the alternating minimization algorithm (AMA). In contrast to previously considered algorithms, our ADMM and AMA formulations provide simple and unified frameworks for solving the convex clustering problem under the previously studied norms and open the door to potentially novel norms. We demonstrate the performance of our algorithm on both simulated and real data examples. While the differences between the two algorithms appear to be minor on the surface, complexity analysis and numerical experiments show AMA to be significantly more efficient.
\end{abstract}

\noindent%
{\it Keywords:} {Convex optimization, Regularization paths, Alternating minimization algorithm,
  Alternating direction method of multipliers, Hierarchical clustering,
  $k$-means}

\spacingset{1.45}
\section{Introduction}
\label{sec:introduction}
In recent years convex relaxations of many fundamental, yet combinatorially hard, optimization problems 
in engineering, applied mathematics, and statistics have been introduced \citep{Tro2006}. Good, and sometimes nearly optimal solutions, can be achieved at affordable computational prices for problems that appear at first blush to be computationally intractable. In this paper, we introduce two new algorithmic frameworks based on variable splitting that generalize and extend recent efforts to convexify the classic unsupervised problem of clustering.

\citet{LinOhlLju2011} and \citet{HocVerBac2011} formulate the clustering task as a convex optimization problem. Given $n$ points $\bx_1,\ldots,\bx_n$ in $\Real^p$, they suggest minimizing the convex criterion
\begin{eqnarray}
F_{\gamma}(\bU) & = & \frac{1}{2}\sum_{i=1}^n \|\bx_i-\bu_i\|_2^2 + \gamma \sum_{i<j}w_{ij} \|\bu_i-\bu_j \|,
\label{eq:objective_function}
\end{eqnarray}
where $\gamma$ is a positive tuning constant, $w_{ij}$ is a nonnegative weight, and the $i$th column
$\bu_i$ of the matrix $\bU$ is the cluster center attached to point $\bx_i$. \citet{LinOhlLju2011} consider an $\ell_p$ norm penalty on the differences $\bu_i - \bu_j$ while \citet{HocVerBac2011} consider $\ell_1$, $\ell_2,$ and $\ell_\infty$ penalties. In the current paper, an arbitrary norm defines the penalty.

The objective function bears some similarity to the fused lasso signal approximator \citep{TibSauRos2005}. When the $\ell_1$ penalty is used in definition (\ref{eq:objective_function}), we recover a special case of the General Fused Lasso \citep{Hoe2010,TibTay2011}. In the graphical interpretation of clustering, each point corresponds to a node in a graph, and an edge connects nodes $i$ and $j$ whenever $w_{ij} > 0$. \Fig{graph} depicts an example. In this case, the objective function $F_{\gamma}(\bU)$ separates over the connected components of the underlying graph. Thus, one can solve for the optimal $\bU$ component by component. Without loss of generality, we assume the graph is connected. 

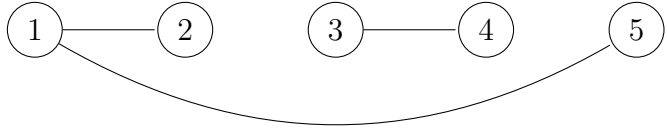
\begin{figure}
\centering
\begin{tikzpicture}[>=stealth',shorten >=1pt,auto,node distance=2cm,
  main node/.style={circle,draw}]

  \node[main node] (1) {1};
  \node[main node] (2) [right of=1] {2};
  \node[main node] (3) [right of=2] {3};
  \node[main node] (4) [right of=3] {4};
  \node[main node] (5) [right of=4] {5};

  \path[every node/.style={font=\sffamily\small}]
    (1)         edge [bend right] node[left] {} (5)
    edge node [left] {} (2)
    (3) edge node [right] {} (4);
\end{tikzpicture}
\caption{A graph with positive weights $w_{12}$, $w_{15}$, $w_{34}$ and all other weights $w_{ij} = 0$.}
\label{fig:graph}
\end{figure}

When $\gamma=0$, the minimum is attained when $\bu_i=\bx_i$, and each point occupies a unique cluster.
As $\gamma$ increases, the cluster centers begin to coalesce. Two points $\bx_i$ and $\bx_j$ with $\bu_i=\bu_j$ are said to belong to the same cluster. For sufficiently high $\gamma$ all points coalesce into a single cluster. Because the objective function $F_{\gamma}(\bU)$ in equation \Eqn{objective_function} is strictly convex and coercive, it possesses a unique minimum point for each value of $\gamma$.  If we plot the solution matrix $\bU$ as a function of $\gamma$, then we can ordinarily identify those values of $\gamma$ giving $k$ clusters for any integer $k$ between $p$ and $1$. In theory, $k$ can decrement by more than 1 as certain critical values of $\gamma$ are passed. Indeed, when points are not well separated, we observe that many centroids will coalesce abruptly unless care is taken in choosing the weights $w_{ij}$.

The benefits of this formulation are manifold. As we will show, convex relaxation admits a simple and fast iterative algorithm that is guaranteed to converge to the unique global minimizer. In contrast, the classic $k$-means problem has been shown to be NP-hard \citep{AloDesHan2009,DasFre2009}. In addition, the classical greedy algorithm for solving $k$-means clustering often gets trapped in suboptimal local minima \citep{For1965,Llo1982, Mac1967}.  

\begin{figure}
\centering
\includegraphics[scale=0.35]{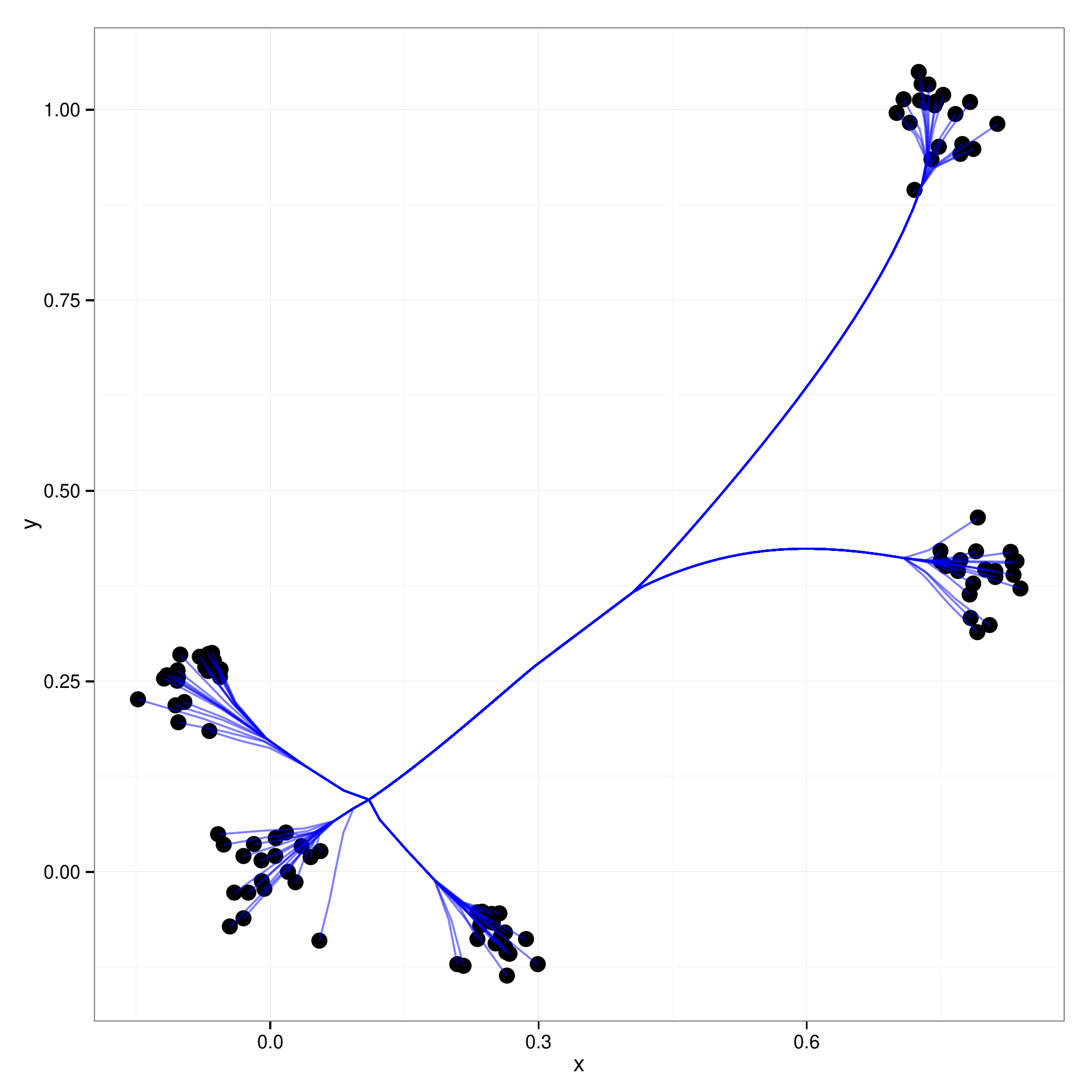}
\caption{Cluster path assignment: The simulated example shows five well separated clusters and the assigned clustering from applying the convex clustering algorithm using an $\ell_2$-norm. The lines trace the path of the individual cluster centers as the regularization parameter $\gamma$ increases.}
\label{fig:clusterpath}
\end{figure}

Another vexing issue in clustering is determining the number of clusters. Agglomerative hierarchical clustering \citep{GowRos1969,Joh1967,LanWil1967,Mur1983,War1963} finesses the problem by computing an entire clustering path. Agglomerative approaches, however, can be computationally demanding and tend to fall into suboptimal local minima since coalescence events are not reversed. The alternative convex relaxation considered here performs continuous clustering just as the lasso \citep{CheDonSau1998, Tib1996} performs continuous variable selection. \Fig{clusterpath} shows how the solutions to the alternative convex problem traces out an intuitively appealing, globally optimal, and computationally tractable solution path.

\subsection{Contributions}
Our main contributions are two new methods for solving the convex relaxation and their application to clustered regression problems. Relatively little work has been published on algorithms for solving this optimization problem. In fact, the only other paper introducing dedicated algorithms for minimizing criterion (\ref{eq:objective_function}) that we are aware of is \cite{HocVerBac2011}. \citet{LinOhlLju2011} used the off-the-shelf convex solver CVX \citep{CVX2012,GraBoy2008} to generate solution paths. \citet{HocVerBac2011} note that CVX is useful for solving small problems but a dedicated formulation is required for scalability. Thus, they introduced three distinct algorithms for the three most commonly encountered norms. Given the $\ell_1$ norm and unit weights $w_{ij}$, the objective function separates, and they solve the convex clustering problem by the exact path following method designed for the fused lasso \citep{Hoe2010}. For the $\ell_1$ and $\ell_2$ norms with arbitrary weights $w_{ij}$, they employ subgradient descent in conjunction with active sets. Finally, they solve the convex clustering problem under the $\ell_\infty$ norm by viewing it as minimization of a Frobenius norm over a polytope. In this guise, the problem succumbs to the Frank-Wolfe algorithm \citep{FraWol1956} of quadratic programming.

In contrast to this piecemeal approach, we introduce two similar generic frameworks for minimizing the convex clustering objective function with an arbitrary norm. One approach solves the problem by the alternating direction method of multipliers (ADMM), while the other solves it by the alternating minimization algorithm (AMA). The key step in both cases computes the proximal map of a given norm. Consequently, both of our algorithms apply provided the penalty norm admits efficient computation of its proximal map. 

In addition to introducing new algorithms for solving the convex clustering problem, the current paper contributes 
in other concrete ways: (a) We combine existing results on AMA and ADMM with the special structure of the convex clustering problem to characterize both of the new algorithms theoretically. In particular, the clustering problem formulation gives a minimal set of extra assumptions needed to prove the convergence of the ADMM iterates to the unique global minimum. We also explicitly show how the computational and storage complexity of our algorithms scales with the connectivity of the underlying graph. Examination of the dual problem enables us to identify a fixed step size for AMA that is associated with the Laplacian matrix of the underlying graph. Finally, our complexity analysis enables us to rigorously quantify the efficiency of the two algorithms so the two methods can be compared. (b) We provide new proofs of intuitive properties of the solution path. These results are tied solely to the minimization of the objective function \Eqn{objective_function} and hold  regardless of the algorithm used to find the minimum point. (c) We provide guidance on how to choose the weights $w_{ij}$. Our suggested choices diminish computational complexity and enhance solution quality. In particular, we show that employing $k$-nearest neighbor weights allows the storage and computation requirements of our algorithms to grow linearly in the problem size.

\subsection{Related Work}
\label{sec:related}

The literature on clustering is immense; the reader can consult the books \citep{Gor1999,Har1975,KauRou1990,Mir1996,WuWun2009} for a comprehensive review. The clustering function \Eqn{objective_function} can be viewed as a convex relaxation of either $k$-means clustering \citep{LinOhlLju2011} or hierarchical agglomerative clustering \citep{HocVerBac2011}. Both of these classical
clustering methods \citep{Sne1957,So1948,War1963} come in several varieties. The literature on $k$-means clustering
reports notable improvements in the computation \citep{Elk2003} and quality of solutions  \citep{ArtVas2007,BraManStr1997,KauRou1990} delivered by the standard greedy algorithms. Faster methods for agglomerative hierarchical clustering have been developed as well \citep{Fra1998}. Many statisticians view the hard cluster assignments of $k$-means as less desirable than the probabilistic assignments generated by mixture models \citep{McL2000,TitSmiMak1985}. Mixture models have the advantage of gracefully assigning points to overlapping clusters. These models are amenable to an EM algorithm and can be extended to infinite mixtures \citep{Fer1973,Ras2000,Nea2000}. 

Alternative approaches to clustering involve identifying components in the associated graph via its Laplacian matrix. Spectral clustering \citep{Lux2007} can be effective in cases when the clusters are non-convex and linearly inseparable. Although spectral clustering is valuable, it does not conflict with convex relaxation.  Indeed, \citet{HocVerBac2011} demonstrate that convex clustering can be effectively merged with spectral clustering. Although we agree with this point, the solution path uncovered by convex clustering is meritorious in its own right because it partially obviates the persistent need for determining the number of clusters. 

\subsection{Notation}
\label{sec:notation}

Throughout, scalars are denoted by lowercase letters ($a$), vectors
by boldface lowercase letters ($\bu$), and matrices by boldface capital letters ($\bU$). 
The $j$th column of a matrix $\bU$ is denoted by $\bu_{j}$. At times in our derivations, it will be easier to work with vectorized matrices. We adopt the convention of denoting the vectorization of a matrix $(\bU)$ by its lower case letter in boldface ($\bu$). Finally, we denote sets by upper case letters ($B$).

\subsection{Organization}
The rest of the paper is organized as follows. We first characterize the solution path theoretically. Previous papers take intuitive properties of the path for granted. We then review the ADMM and AMA algorithms and adapt them to solve the convex clustering problem. Once the algorithms are specified, we discuss their computational and storage complexity, convergence, and acceleration. We then present some numerical examples of clustering. The paper concludes with a general discussion.

\section{Properties of the solution path}
\label{sec:solutionpath}

The solution path $\bU(\gamma,\bw)$ has several nice properties as a function of the regularization parameter $\gamma$ and its weights $\bw = \{w_{ij}\}$ that expedite its numerical computation. The proof of the following two propositions can be found in the Supplemental Materials.

\begin{proposition}
\label{prop:solution_path_continuity} The solution path $\bU(\gamma)$ exists and depends continuously on $\gamma$. The path also depends continuously on the weight matrix $\bw$.
\end{proposition}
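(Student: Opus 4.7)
The plan is to handle existence first and then establish continuity via the standard argmin-continuity argument for strictly convex, coercive parametric minimization problems.

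For existence and uniqueness, I would observe that for every fixed $(\gamma,\bw)$ with $\gamma\ge 0$ and $w_{ij}\ge 0$, the map $\bU\mapsto F_\gamma(\bU)$ is the sum of a strictly convex quadratic and a convex (hence proper, closed) sum of norms. It is therefore strictly convex, lower semicontinuous, and coercive (the quadratic term dominates as $\|\bU\|\to\infty$). Thus the minimizer $\bU(\gamma,\bw)$ exists and is unique; this is precisely the assertion already quoted in the introduction.

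For continuity, I would fix $(\gamma_0,\bw_0)$, let $\bU_0=\bU(\gamma_0,\bw_0)$, and take an arbitrary sequence $(\gamma_k,\bw_k)\to(\gamma_0,\bw_0)$ with $\gamma_k\ge 0$, $\bw_k\ge 0$. Write $\bU_k=\bU(\gamma_k,\bw_k)$. Three steps:
\begin{enumerate}
\item \textbf{Uniform boundedness of $\{\bU_k\}$.} Using $\bU=\bX$ as a competitor yields
\[
\tfrac{1}{2}\sum_i\|\bx_i-\bu_{i,k}\|_2^2\;\le\;F_{\gamma_k,\bw_k}(\bU_k)\;\le\;F_{\gamma_k,\bw_k}(\bX)\;=\;\gamma_k\sum_{i<j}w_{ij,k}\|\bx_i-\bx_j\|,
\]
and the right-hand side is bounded because $(\gamma_k,\bw_k)$ converges. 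So $\{\bU_k\}$ lies in a compact set.
\item \textbf{Identification of subsequential limits.} Let $\bU_{k_\ell}\to\bU^\star$ along any convergent subsequence. Joint continuity of $(\gamma,\bw,\bU)\mapsto F_\gamma(\bU)$ gives $F_{\gamma_{k_\ell},\bw_{k_\ell}}(\bU_{k_\ell})\to F_{\gamma_0,\bw_0}(\bU^\star)$ and $F_{\gamma_{k_\ell},\bw_{k_\ell}}(\bV)\to F_{\gamma_0,\bw_0}(\bV)$ for every fixed $\bV$. Passing to the limit in the inequality $F_{\gamma_{k_\ell},\bw_{k_\ell}}(\bU_{k_\ell})\le F_{\gamma_{k_\ell},\bw_{k_\ell}}(\bV)$ gives $F_{\gamma_0,\bw_0}(\bU^\star)\le F_{\gamma_0,\bw_0}(\bV)$ for every $\bV$. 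By uniqueness of the minimizer, $\bU^\star=\bU_0$.
\item \textbf{Conclusion.} A bounded sequence whose every convergent subsequence has the same limit $\bU_0$ must itself converge to $\bU_0$. Since $(\gamma_k,\bw_k)$ was arbitrary, $(\gamma,\bw)\mapsto\bU(\gamma,\bw)$ is continuous, which covers both claims of the proposition.
\end{enumerate}

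I expect the main subtlety to be Step~1: one needs that the competitor bound depends continuously (and hence boundedly) on $(\gamma,\bw)$ along the sequence, which is why it matters to use the trivial feasible point $\bU=\bX$ rather than something involving $\bU_0$. Everything else is a routine application of joint continuity plus strict convexity; essentially this is Berge's maximum theorem specialized to a strictly convex coercive objective on all of $\mathbb R^{p\times n}$, but writing the three-line bounded-sequence argument explicitly avoids invoking any heavy machinery.
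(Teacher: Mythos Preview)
Your argument is correct and is exactly the standard bounded-subsequence argmin-continuity proof one would expect here: existence and uniqueness from strict convexity plus coercivity, then continuity by showing any sequence of minimizers is bounded (via the competitor $\bU=\bX$) and that every cluster point must be the unique minimizer at the limiting parameter. This matches the approach the paper takes in its supplement, so there is nothing to add.
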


Existence and uniqueness of $\bU$ sets the stage for a well-posed optimization problem. 
Continuity of $\bU$ suggests employing homotopy continuation. Indeed, empirically we find great  time savings in solving a sequence of problems over a grid of $\gamma$ values when we use the solution of a previous value of $\gamma$ as a warm start or initial value for the next larger $\gamma$ value.

We also would like a rigorous argument that the centroids eventually coalesce to a common point as $\gamma$ becomes sufficiently large. For the example shown in \Fig{graph}, we intuitively expect for sufficiently large $\gamma$ that the
columns of $\bU$ satisfy $\bu_3 = \bu_4 = \bar{\bx}_{34}$ and $\bu_1 = \bu_2 = \bu_5 = \bar{\bx}_{125}$, where $\bar{\bx}_{34}$ is the mean of $\bx_3$ and $\bx_4$ and $\bar{\bx}_{125}$ is the mean of $\bx_1$, $\bx_2,$ and $\bx_5$. The next proposition confirms our intuition.

\begin{proposition}
\label{prop:coalesce}
Suppose each point corresponds to a node in a graph with an edge between nodes $i$ and $j$ whenever $w_{ij} > 0$. If this graph is connected, then $F_{\gamma}(\bU)$ is minimized by $\bar{\bX}$ for $\gamma$ sufficiently large, where each column of $\bar{\bX}$ equals the average $\bar{\bx}$ of the $n$ vectors $\bx_i$.
\end{proposition}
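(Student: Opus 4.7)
The plan is to verify first-order optimality of $\bU = \bar{\bX}$ via the subdifferential conditions. Since $F_\gamma$ is strictly convex and coercive (as noted in \Sec{introduction}), these conditions are both necessary and sufficient, so producing a single subgradient certificate will identify $\bar{\bX}$ as the global minimum. At $\bar{\bX}$ the penalty vanishes, and the subdifferential of $\|\cdot\|$ at $\bZero$ equals the unit ball $B^{*}$ of the dual norm $\|\cdot\|_*$. Writing the optimality condition column by column, $\bar{\bX}$ minimizes $F_\gamma$ if and only if there exist vectors $\bz_{ij} \in B^{*}$, indexed by the edges $(i,j)$ with $i<j$ and $w_{ij}>0$, such that for every $k \in \{1,\ldots,n\}$,
\begin{equation}
\bx_k - \bar{\bx} \;=\; \gamma \sum_{j>k} w_{kj}\, \bz_{kj} \;-\; \gamma \sum_{i<k} w_{ik}\, \bz_{ik}. \tag{$\ast$}
\end{equation}

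The next step reinterprets $(\ast)$ as a network-flow identity on the graph $G$ whose edges are the pairs $(i,j)$ with $w_{ij}>0$. I set $\bff_{ij} := \gamma w_{ij}\, \bz_{ij}$, viewed as a flow on edge $(i,j)$ oriented from $i$ to $j$; then $(\ast)$ requires exactly that the net outflow at each node $k$ equals the ``demand'' $\bx_k - \bar{\bx}$. These demands sum to $\bZero$ across all nodes, and $G$ is connected by hypothesis, so a feasible flow is guaranteed to exist. An explicit construction is to fix any spanning tree $T \subseteq G$ rooted at a node $r$, set $\bff_{ij} = \bZero$ on non-tree edges, and for each tree edge assign a flow equal to $\sum_{m \in S}(\bx_m - \bar{\bx})$, with sign chosen according to orientation, where $S$ is the set of nodes on the side of that edge away from $r$. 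A leaf-to-root induction then verifies that Kirchhoff balance holds at every node.

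To conclude, the flows $\bff_{ij}$ so constructed are fixed vectors in $\Real^p$ that do not involve $\gamma$. Defining
\[
\gamma_0 \;:=\; \max_{(i,j)\in T}\; \frac{\|\bff_{ij}\|_*}{w_{ij}},
\]
the corresponding subgradients $\bz_{ij} := \bff_{ij}/(\gamma w_{ij})$ lie in $B^{*}$ for every $\gamma \ge \gamma_0$, so $(\ast)$ is satisfied and $\bar{\bX}$ is optimal. The main obstacle is the translation from abstract subdifferential optimality to a concrete flow problem; once that bridge is built, connectivity of $G$ together with the identity $\sum_i(\bx_i - \bar{\bx}) = \bZero$ supplies a feasible flow, and the threshold $\gamma_0$ follows by inspection. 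Note that the argument never uses smoothness or differentiability of $\|\cdot\|$, so it covers any norm admitting a dual ball, including the $\ell_1$, $\ell_2$, and $\ell_\infty$ norms considered in prior work.
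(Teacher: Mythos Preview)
Your argument is correct. The subdifferential calculus is applied properly: at $\bar{\bX}$ every centroid difference vanishes, the subdifferential of a norm at the origin is the dual unit ball, and the column-wise stationarity condition you display is exactly a Kirchhoff balance on the weight graph with node supplies $\bx_k-\bar{\bx}$. Existence of a feasible flow on a connected graph with zero-sum demands is standard, and your spanning-tree construction makes it explicit; the threshold $\gamma_0$ then drops out immediately because the tree flows are $\gamma$-independent. Setting $\bff_{ij}=\bZero$ on non-tree edges also forces the corresponding $\bz_{ij}=\bZero\in B^{*}$, so the dual-ball constraint is satisfied on every edge, not only on the tree edges.

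The paper defers its own proof of this proposition to the Supplemental Materials, which are not part of the source provided here, so a line-by-line comparison is not possible. Your route via first-order optimality conditions recast as a network-flow feasibility problem is the natural one for this statement and delivers, as a bonus, an explicit constructive bound on the coalescence threshold $\gamma_0$ in terms of the chosen spanning tree and the data.
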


We close this section by noting that in general the clustering paths are not guaranteed to be agglomerative. In the special case of the $\ell_1$ norm with uniform weights $w_{ij} =1$, \citet{HocVerBac2011} prove that the path is agglomerative. In the same paper they give an $\ell_2$ norm example where the centroids fuse and then unfuse as the regularization parameter increases. This behavior, however, does not seem  to occur very frequently in practice. Nonetheless, in the algorithms we describe next, we allow for such fission events to ensure that our computed solution path is truly the global minimizer of the convex criterion (\ref{eq:objective_function}).

\section{Algorithms to Compute the Clustering Path}
\label{sec:algorithms}

Having characterized the solution path $\bU(\gamma)$, we now tackle the task of computing it. We present two closely related optimization approaches: the alternating direction method of multipliers (ADMM) \citep{BoyParChu2011,GabMer1976, GloMar1975} and the alternating minimization algorithm (AMA) \citep{Tse1991}. 
Both approaches employ variable splitting to handle the shrinkage penalties in the convex clustering criterion \Eqn{objective_function}.

\subsection{Reformulation of Convex Clustering}
\label{sec:reformulation}

Let us first recast the convex clustering problem as the equivalent constrained problem
\begin{equation}
\label{eq:split_objective_cluster}
\begin{split}
&\text{minimize} \; \frac{1}{2}\sum_{i=1}^n \|\bx_i-\bu_i\|_2^2 +
\gamma \sum_{l \in \mathcal{E}} w_{l} \|\bv_{l} \| \\
&\text{subject to} \; \bu_{l_1} - \bu_{l_2} - \bv_{l} = \bZero. 
\end{split}
\end{equation}
Here we index a centroid pair by $l = (l_1, l_2)$ with $l_1 < l_2$, define the set of edges over the non-zero weights $\mathcal{E} = \{ l = (l_1,l_2) : w_l > 0\}$, and introduce a new variable $\bv_l = \bu_{l_1} - \bu_{l_2}$ to account for the difference between the two centroids. The purpose of variable splitting is to simplify optimization with respect to the penalty terms.

Splitting methods such as ADMM and AMA have been successfully used to attack similar problems in image restoration \citep{GolOsh2009}. ADMM and AMA are now motivated as variants of the augmented Lagrangian method (ALM) \citep{Hes1969,NocWri2006,Pow1969,Roc1973}.  Let us review how ALM approaches the constrained optimization problem 
\begin{equation}
\label{eq:split_objective}
\begin{split}
&\text{minimize} \; f(\bu) + g(\bv) \\
&\text{subject to} \; \bA\bu + \bB\bv = \bc,
\end{split}
\end{equation}
which includes the constrained minimization problem \Eqn{split_objective_cluster} as a special case.
ALM solves the equivalent problem 
\begin{equation}
\begin{split}
\label{eq:split_objective_alm}
&\text{minimize}\; f(\bu) + g(\bv) + \frac{\nu}{2} \|  \bc - \bA\bu - \bB\bv \|_2^2, \\
&\text{subject to} \; \bA\bu + \bB\bv = \bc 
\end{split}
\end{equation}
by imposing a quadratic penalty on deviations from the feasible set. The two problems \Eqn{split_objective} and \Eqn{split_objective_alm}
are equivalent because their objective functions coincide for any point $(\bu, \bv)$ satisfying the equality constraint. We will see in a moment what the purpose of the quadratic penalty term is. First, recall that finding the minimizer to an equality constrained optimization problem is equivalent to the identifying the saddle point of the associated Lagrangian function.
The Lagrangian for the ALM problem \begin{eqnarray*}
\mathcal{L}_{\nu}(\bu,\bv,\blambda) & = & f(\bu) + g(\bv) + \langle \blambda, \bc - \bA\bu - \bB\bv \rangle
+ \frac{\nu}{2} \|  \bc - \bA\bu - \bB\bv \|_2^2 
\end{eqnarray*}
invokes the dual variable $\blambda$ as a vector of Lagrange multipliers. If $f(\bu)$ and $g(\bv)$ are convex and $\bA$ and $\bB$ have full column rank, then the objective \Eqn{split_objective_alm} is strongly convex, and the dual problem reduces to the unconstrained maximization of a concave function with Lipschitz continuous gradient. The dual problem is therefore a candidate for gradient ascent. In fact, this is the strategy that ALM takes in the updates
\begin{equation}
\label{eq:alm_updates}
\begin{split}
(\bu^{m+1}, \bv^{m+1}) & \amp = \amp \underset{\bu, \bv}{\arg\min}\; \mathcal{L}_\nu(\bu,\bv, \blambda^m) \\
\blambda^{m+1} & \amp = \amp \blambda^m + \nu (\bc - \bA\bu^{m+1} - \bB\bv^{m+1}). \\
\end{split}
\end{equation}

Unfortunately, the minimization of the augmented Lagrangian over $\bu$ and $\bv$ jointly is often difficult.  ADMM and AMA adopt different strategies in simplifying the minimization subproblem in the ALM
updates \Eqn{alm_updates}. ADMM minimizes the augmented Lagrangian one block of variables at a time. This
yields the algorithm
\begin{equation}
\label{eq:admm_updates}
\begin{split}
\bu^{m+1} & \amp = \amp \underset{\bu}{\arg\min}\; \mathcal{L}_\nu(\bu,\bv^m, \blambda^m) \\
\bv^{m+1} & \amp = \amp \underset{\bv}{\arg\min}\; \mathcal{L}_\nu(\bu^{m+1},\bv, \blambda^m) \\
\blambda^{m+1} & \amp = \amp \blambda^m + \nu(\bc - \bA\bu^{m+1} - \bB\bv^{m+1}).
\end{split}
\end{equation}
AMA takes a slightly different tack and updates the first block $\bu$ without augmentation,
assuming $f(\bu)$ is strongly convex. This change is accomplished by setting the positive tuning constant $\nu$ to be 0. Later we will see that this seemingly innocuous change will pay large dividends in the convex clustering problem. The overall algorithm iterates
according to 
\begin{equation}
\label{eq:ama_updates}
\begin{split}
\bu^{m+1} & \amp = \amp \underset{\bu}{\arg\min}\; \mathcal{L}_0(\bu,\bv^m, \blambda^m) \\
\bv^{m+1} & \amp = \amp \underset{\bv}{\arg\min}\; \mathcal{L}_\nu(\bu^{m+1},\bv, \blambda^m) \\
\blambda^{m+1} & \amp = \amp \blambda^m + \nu (\bc - \bA\bu^{m+1} - \bB\bv^{m+1}). 
\end{split}
\end{equation}
Although block descent appears to complicate matters, it often markedly simplifies optimization in the end. 
In the case of convex clustering, the updates are either simple linear transformations or evaluations of proximal maps.

\subsection{Proximal Map}
\label{sec:proximal}

For $\sigma > 0$ the function
\begin{eqnarray*}
\prox_{\sigma \Omega}(\bu) & = & \underset{\bv}{\arg\min}\;\left[\sigma \Omega(\bv)+ \frac{1}{2} \| \bu - \bv \|_2^2 \right]
\end{eqnarray*}
is a well-studied operation called the proximal map of the function $\Omega(\bv)$. The proximal map exists and is unique whenever the function $\Omega(\bv)$ is convex and lower semicontinuous. Norms satisfy these conditions,
and for many norms of interest the proximal map can be evaluated by either an explicit formula or an efficient algorithm. \Tab{prox} lists some common examples. The proximal maps for the $\ell_1$ and $\ell_2$ norms have explicit solutions and can be computed in $\mathcal{O}(p)$ operations for a vector $\bv \in \Real^p$. 
Another common example is the $\ell_{1,2}$ norm 
\begin{eqnarray*}
\|\bv\|_{1,2} & = & \sum_{g \in \mathcal{G}} \| \bv_g \|_2,
\end{eqnarray*}
which partitions the components of $\bv$ into non-overlapping groups $\mathcal{G}$. In this case there is also a simple shrinkage formula. The proximal map for the $\ell_\infty$ norm requires projection onto the unit simplex and lacks an explicit solution. However, there are good algorithms for projecting onto the unit simplex \citep{DucShaSin2008,Mic1986}. In particular, Duchi et al.\@'s projection algorithm makes it possible to evaluate $\prox_{\sigma\| \cdot \|_\infty}(\bv)$ in $\mathcal{O}(p\log p)$ operations. 

\begin{table}[th]
\caption[Proximal Map]{Proximal maps for common norms.
\label{tab:prox}}
\centering
\begin{tabular}{cccc}\\
\toprule
Norm & $\Omega(\bv)$ & $\prox_{\sigma\Omega}(\bv)$ & Comment \\
\midrule
$\ell_1$ & $\| \bv \|_1$ & $\left [ 1 - \frac{\sigma}{| v_l |} \right ]_+ v_l$ & Element-wise soft-thresholding \\
\midrule		
$\ell_2$ & $\| \bv \|_2$ & $\left [1 - \frac{\sigma}{\| \bv \|_2} \right]_+ \bv$ & Block-wise soft-thresholding \\
\midrule	
$\ell_{1,2}$ & $\sum_{g \in \mathcal{G}} \| \bv_g \|_2$ & $\left [1 - \frac{\sigma}{\| \bv_g \|_2} \right]_+ \bv_g$
& $\mathcal{G}$ is a partition of $\{1, \ldots, p\}$ \\
\midrule	
$\ell_\infty$ & $\| \bv \|_\infty$ & $\bv - \mathcal{P}_{\sigma S}(\bv)$ & $S$ is the unit simplex \\
\bottomrule
\end{tabular}
\end{table}

\subsection{ADMM updates}
\label{sec:ADMM}

The augmented Lagrangian is given by
\begin{equation}
\label{eq:augmented_Lagrangian}
\begin{split}
\mathcal{L}_\nu(\bU,\bV,\bLambda) & \amp = \amp \frac{1}{2}\sum_{i=1}^n \|\bx_i-\bu_i\|_2^2 +
\gamma \sum_{l \in \mathcal{E}} w_{l} \|\bv_{l} \| \\
& + \sum_{l \in \mathcal{E}} \langle \blambda_{l}, \bv_{l}-\bu_{l_1} +\bu_{l_2} \rangle
+ \,  \frac{\nu}{2}\, \sum_{l \in \mathcal{E}} \|\bv_{l}- \bu_{l_1} +\bu_{l_2} \|_2^2,
\end{split}
\end{equation}
where $\mathcal{E}$ is the set of edges corresponding to non-zero weights.  To update $\bU$ we need to minimize the following function
\begin{eqnarray*}
f(\bU) =  \frac{1}{2}\sum_{i=1}^n \|\bx_i-\bu_i\|_2^2 + \frac{\nu}{2}\, \sum_{l \in \mathcal{E}} \|\btildev_{l} - \bu_{l_1} +\bu_{l_2} \|_2^2,
\end{eqnarray*}
where $\btildev_{l} = \bv_{l} + \nu^{-1}\blambda_{l}$. We can rewrite the above function in terms of $\bu$ instead of the columns $\bu_i$ of the matrix $\bU$, namely
\begin{eqnarray*}
f(\bu) = \frac{1}{2} \lVert \bx - \bu \rVert_2^2 + \frac{\nu}{2}\sum_{l \in \mathcal{E}} \| \bA_{l} \bu - \btildev_{l} \|_2^2,
\end{eqnarray*}
where $\bA_{l} = \left [ (\be_{l_1} - \be_{l_2})^t \Kron \bI \right]$ and $\Kron$ denotes the Kronecker product. One can see this by noting that $\bu_{l_1} - \bu_{l_2} = \bU(\be_{l_1} - \be_{l_2})$ and applying the identity 
\begin{eqnarray}
\label{eq:vec}
\text{vec$(\bS\bT)$ = $[\bT^t \Kron \bI]$vec$(\bS)$.}
\end{eqnarray}
We can further simplify $f(\bu)$. If $\varepsilon = \lvert \mathcal{E} \rvert$ denotes the number of non-zero weights, then
\begin{eqnarray*}
f(\bu) = \frac{1}{2}\lVert \bu - \bx \rVert_2^2 + \frac{\nu}{2} \lVert \bA \bu - \btildev \rVert_2^2,
\end{eqnarray*}
where
\begin{eqnarray*}
\begin{gathered}
\bA^t \amp = \amp \begin{pmatrix}
\bA_{1}^t & \cdots & \bA_{\varepsilon}^t
\end{pmatrix}
\qtext{and}
\btildev^t \amp = \amp \begin{pmatrix}
\btildev_{1}^t & \cdots & \btildev_{\varepsilon} ^t
\end{pmatrix}.
\end{gathered}
\end{eqnarray*}

The stationary condition requires solving the linear system of equations
\begin{eqnarray*}
[\bI + \nu\bA^t\bA]\bu = \bx + \bA^t\btildev.
\end{eqnarray*}
The above system consists of $np$ equations in $np$ unknowns but has quite a bit of structure that we can exploit.
In fact, solving the above linear system is equivalent to solving a smaller system of $n$ equations in $n$ unknowns. Note that
\begin{eqnarray*}
\bI + \nu\bA^t\bA & = & \left [ \bI + \nu\sum_{l \in \mathcal{E}} (\be_{l_1} - \be_{l_2})(\be_{l_1} - \be_{l_2})^t \right ] \Kron \bI \\
\bA^t\btildev & = & \sum_{l \in \mathcal{E}} [(\be_{l_1} - \be_{l_2}) \Kron \bI]\btildev_{l}.
\end{eqnarray*}
Applying the above equalities, the identity (\ref{eq:vec}), and the fact that $[\bS \Kron \bT]^{-1} = \bS^{-1} \Kron \bT^{-1}$ when $\bS$ and $\bT$ are invertible gives the following equivalent linear system
\begin{eqnarray}
\label{eq:u_update_linear_system}
\bU \bM = \bX + \sum_{l \in \mathcal{E}} \btildev_{l} (\be_{l_1} - \be_{l_2})^t,
\end{eqnarray}
where
\begin{eqnarray*}
\bM = \bI + \nu\sum_{l \in \mathcal{E}} (\be_{l_1} - \be_{l_2})(\be_{l_1} - \be_{l_2})^t.
\end{eqnarray*}

If the edge set $\mathcal{E}$ contains all possible edges, then the update for $\bU$ can be computed analytically. The key observation is that in the completely connected case
\begin{eqnarray*}
\sum_{l \in \mathcal{E}} (\be_{l_1} - \be_{l_2})(\be_{l_1} - \be_{l_2})^t = n\bI - \bOne\bOne^t.
\end{eqnarray*}
Thus, the matrix $\bM$ can be expressed as the sum of a diagonal matrix and a rank-1 matrix, namely
\begin{eqnarray*}
\bM = (1 + n\nu)\bI - \nu \bOne\bOne^t.
\end{eqnarray*}
Applying the Sherman-Morrison formula, we can write the inverse of $\bM$ as
\begin{eqnarray*}
\bM^{-1} = \frac{1}{1 + n\nu} \left [\bI + \nu \bOne\bOne^t \right].
\end{eqnarray*}
Thus,
\begin{eqnarray*}
\bU = \frac{1}{1 + n\nu}
\left [\bX + \sum_{l \in \mathcal{E}} \btildev_{l} (\be_{l_1} - \be_{l_2})^t \right ]\left [\bI + \nu\bOne\bOne^t \right ].
\end{eqnarray*}
After some algebraic manipulations on the above equations, we arrive at the following updates
\begin{eqnarray*}
\bu_i & = & \frac{1}{1+n\nu} \by_i + \frac{n \nu}{1 + n\nu} \bar{\bx},
\end{eqnarray*}
where $\bar{\bx}$ is the average column of $\bX$ and
\begin{eqnarray*}
\by_i & = & \bx_i+ \sum_{l_1 = i} [\blambda_{l} + \nu \bv_{l} ]-\sum_{l_2 = i}[\blambda_{l} + \nu \bv_{l} ].
\end{eqnarray*}
Before deriving the updates for $\bV$, we remark that while using a fully connected weights graph allows us to write explicit updates for $\bU$, doing so comes at the cost of increasing the number of variables $\bv_l$ and $\blambda_l$. Such choices are not immaterial, and we will discuss these tradeoffs later in the paper.

To update $\bV$, we first observe that the Lagrangian $\mathcal{L}_\nu(\bU,\bV,\bLambda)$ is
separable in the vectors $\bv_{l}$. A particular difference vector $\bv_{l}$ is determined by the
proximal map
\begin{equation}
\label{eq:update_v}
\begin{split}
\bv_{l} & \amp = \amp \underset{\bv_l}{\arg\min} \; \frac{1}{2}  \left [\|\bv_l - (\bu_{l_1} - \bu_{l_2} - \nu^{-1}\blambda_{l})\|_2^2 + \frac{\gamma w_{l}}{\nu} \| \bv_l \| \right ] \\
 & \amp = \amp \prox_{\sigma_l \| \cdot \|}(\bu_{l_1} - \bu_{l_2} - \nu^{-1}\blambda_{l}),
\end{split}
\end{equation}
where $\sigma_l = \gamma w_l/\nu$. Finally, the Lagrange multipliers are updated by
\begin{eqnarray*}
\blambda_{l} & = & \blambda_{l} + \nu( \bv_{l}-\bu_{l_1}+\bu_{l_2}).
\end{eqnarray*}
\Alg{ADMM} summarizes the updates.

To track the progress of ADMM we use standard methods given in \citep{BoyParChu2011} based on primal and dual residuals. Details on the stopping rules that we employ are given in the Supplemental Materials.

\begin{algorithm}[t]
  \caption{ADMM}
  \label{alg:ADMM}
  Initialize $\bLambda^0$ and $\bV^0$.
  \begin{algorithmic}[1]
    \For{$m = 1, 2, 3, \ldots$} 
    	    \For{$i = 1, \ldots, n$}
	    		\State $\by_i = \bx_i + \sum_{l_1 = i} [\blambda^{m-1}_l + \nu \bv^{m-1}_l] 
			 - \sum_{l_2 = i} [\blambda^{m-1}_l + \nu \bv^{m-1}_l] $
	    \EndFor
	    \State $\bU^{m} = \frac{1}{1+n\nu} \bY + \frac{n\nu}{1 + n\nu} \bar{\bX}$
	    \ForAll{$l$}
		\State $\bv^m_l = \prox_{\sigma_l \| \cdot \|}(\bu^{m}_{l_1} - \bu^{m}_{l_2} - \nu^{-1}\blambda^{m-1}_{l})$
		\State $\blambda^m_{l} = \blambda^{m-1}_{l} + \nu( \bv^m_{l}-\bu^m_{l_1}+\bu^m_{l_2})$
	    \EndFor
    \EndFor
  \end{algorithmic}
\end{algorithm}

\subsection{AMA updates}
\label{sec:AMA}

Since AMA shares its update rules for $\bV$ and $\bLambda$ with ADMM, consider updating $\bU$.
Recall that AMA updates $\bU$ by minimizing the ordinary Lagrangian ($\nu =0$ case), namely
\begin{eqnarray*}
\bU^{m+1} & = & \underset{\bU}{\arg\min}\;
\frac{1}{2} \sum_{i=1}^n \| \bx_i - \bu_i \|_2^2 + \sum_{l} 
\langle \blambda^m_{l}, \bv_l - \bu_{l_1} + \bu_{l_2} \rangle. \\
\end{eqnarray*}
In contrast to ADMM, this minimization separates in each $\bu_i$ and gives an update that does not depend on $\bv_l$
\begin{eqnarray*}
\bu_i^{m+1} & = & \bx_i + \sum_{l_1 = i} \blambda^m_{l} - \sum_{l_2 = i} \blambda^m_{l}. \\
\end{eqnarray*}

Further scrutiny of the updates for $\bV$ and $\bLambda$ reveals additional simplifications.
Moreau's decomposition \citep{ComWaj2005}
\begin{eqnarray*}
\bz & = & \prox_{t h}(\bz)+t \prox_{t^{-1}h^\star}(t^{-1}\bz)
\end{eqnarray*}
allows one to express the proximal map of a function $h$ in terms of the proximal map of its Fenchel conjugate $h^\star$. This decomposition generalizes the familiar orthogonal projection decomposition, namely
$\bz = \mathcal{P}_W(\bz) + \mathcal{P}_{W^\perp}(\bz)$ where $W$ is a closed Euclidean subspace and $W^\perp$ is its orthogonal complement.
If $h(\bz) = \|\bz\|$ is a norm, then $h^\star(\bz) = \delta_B(\bz)$ is the convex indicator function of the
unit ball $B=\{ \by : \| \by \|_\dagger \leq 1\}$ of the dual norm $\| \cdot \|_\dagger$, namely the function that is 0 on $B$ and $\infty$ otherwise. Because the proximal map of the indicator function of a closed convex set collapses to projection onto the set, Moreau's decomposition leads to the identity
\begin{eqnarray}
\prox_{t h}(\bz) & = & \bz- t \prox_{t^{-1}\delta_B}(t^{-1}\bz) \amp = \amp \bz - t \mathcal{P}_{B}(t^{-1}\bz) \label{eq:Moreau_projection} \amp
= \amp \bz - \mathcal{P}_{tB}(\bz),
\end{eqnarray}
where $\mathcal{P}_B(\bz)$ denotes projection onto $B$. In this derivation the identity $t^{-1}\delta_B = \delta_B$ holds because $\delta_B$ takes only the values 0 and $\infty$. Applying the projection formula \Eqn{Moreau_projection}
 to the $\bv_l$ update \Eqn{update_v} yields the revised update
\begin{eqnarray*}
\bv^{m+1}_{l} & = & \bu^{m+1}_{l_1} - \bu^{m+1}_{l_2} - \nu^{-1}\blambda^{m}_l - \mathcal{P}_{tB}[\bu^{m+1}_{l_1}
-\bu^{m+1}_{l_2} - \nu^{-1}\blambda^m_l],
\end{eqnarray*}
for the constant $t = \sigma_l = \gamma w_l/\nu$.

The update for $\blambda_l$ is given by
\begin{eqnarray*}
\blambda^{m+1}_{l} & = & \blambda^{m}_{l} + \nu( \bv^{m+1}_{l}-\bu^{m+1}_{l_1}+\bu^{m+1}_{l_2}).
\end{eqnarray*}
Substituting for the above alternative expression for $\bv_l^{m+1}$ leads to substantial cancellations and the revised formula
\begin{eqnarray*}
\blambda^{m+1}_{l} & = & - \nu \mathcal{P}_{tB}[\bu^{m+1}_{l_1} - \bu^{m+1}_{l_2} - \nu^{-1}\blambda^m_{l}]. 
\end{eqnarray*}
The identities $-P_{tB}(\bz) = \mathcal{P}_{tB}(-\bz)$ and $a\mathcal{P}_{tB}(\bz) = \mathcal{P}_{atB}(a\bz)$ for $a > 0$ further simplify the update to
\begin{eqnarray*}
\blambda^{m+1}_{l} & = & \mathcal{P}_{C_l}(\blambda^{m}_{l} - \nu \bg^{m+1}_l),
\end{eqnarray*}
where $\bg^{m}_l = \bu^{m}_{l_1} - \bu^{m}_{l_2}, C_l = \{\blambda_l: \|\blambda_l\|_\dagger \le \gamma w_l\}$. \Alg{AMA} summarizes the AMA algorithm.  We highlight the fact that we no longer need to compute and store $\bv$ to perform the AMA updates. 

Note that the algorithm look remarkably like a projected gradient algorithm. Indeed, \citet{Tse1991} shows that AMA is actually performing proximal gradient ascent to maximize the dual problem. The dual of the convex clustering problem (\ref{eq:split_objective_cluster}) is 
\begin{equation}
 \label{eq:dual_function}
\begin{split}
D_\gamma(\bLambda) & \amp = \amp  \underset{\bU, \bV}\inf \; \mathcal{L}_{0}(\bU,\bV,\bLambda) \\
& \amp = \amp -\frac{1}{2} \sum_{i=1}^n \| \bDelta_i \|_2^2 - \sum_l \langle \blambda_l, \bx_{l_1} - \bx_{l_2} \rangle
- \sum_{l} \delta_{C_l}(\blambda_l),
\end{split}
\end{equation}
where
\begin{eqnarray*}
\bDelta_i & = & \sum_{l : l_1 = i} \blambda_l - \sum_{l : l_2 = i} \blambda_l .
\end{eqnarray*}
A derivation of the dual is given in the Supplemental Materials.

Since the dual is essentially a constrained least squares problem, it is hardly surprising that it can be solved numerically by the classic projected gradient algorithm. We will see in the next section that in addition to providing a simple interpretation of the AMA method, the dual allows us to derive a rigorous stopping criterion for AMA.  Before proceeding, however, let us emphasize 
that AMA requires tracking of only as many dual variable $\blambda_l$ as there are non-zero weights. We will find later that sparse weights often produces better quality clusterings. Thus, when relatively few weights are non-zero, the number of variables introduced by splitting does not become prohibitive under AMA.

\begin{algorithm}[t]
  \caption{AMA}
  \label{alg:AMA}
  Initialize $\blambda^0$.
  \begin{algorithmic}[1]
    \For{$m = 1, 2, 3, \ldots$} 
    	    \For{$i = 1, \ldots, n$}
	    \State $\bDelta^{m}_i = \sum_{l_1 = i} \blambda^{m-1}_l - \sum_{l_2 = i} \blambda_l^{m-1}$
	    \EndFor
	    \ForAll{$l$}
		\State $\bg_l^{m} = \bx_{l_1} - \bx_{l_2} + \bDelta_{l_1}^{m} - \bDelta_{l_2}^{m}$
		\State $\blambda^{m}_l = \mathcal{P}_{C_l}(\blambda^{m-1}_l - \nu \bg_l^{m})$
	    \EndFor
    \EndFor
  \end{algorithmic}
\end{algorithm}

\subsubsection{Stopping Criterion for AMA}

Recall that the duality gap at the $m$th iterate, $F_\gamma(\bU^{m}) - D_\gamma(\bLambda^m)$, is an upper bound on how far $F_\gamma(\bU^{m})$ is from the optimally minimal value of the objective function.  It is a certificate of optimality as there is a zero duality gap at an optimal solution. In short, if we can compute the duality gap, we can compute how suboptimal the last iterate is when the algorithm terminates.
The explicit functional forms \Eqn{split_objective_cluster} and \Eqn{dual_function}
of the primal and dual functions make it trivial to evaluate the duality
gap for feasible variables, since they depend on the quantities $\bDelta_i$ and $\bg_l  = \bu^{m}_{l_1} - \bu^{m}_{l_2}$, which are computed in the process of making the AMA updates. Thus, we stop the AMA algorithm when 
 \begin{eqnarray*}
F_\gamma(\bU^{m}) - D_\gamma(\bLambda^m) < \tau
\end{eqnarray*}
for $\tau>0$ small.

 \section{Convergence}
 \label{sec:convergence}

Both ADMM and AMA converge under reasonable conditions. Nonetheless, of the two, ADMM converges under broader conditions as its convergence is guaranteed for any $\nu > 0$. Convergence for AMA is guaranteed provided that $\nu$ is not too large. As we will see below, however, that bound is modest and easily identified in the convex clustering problem.

\subsection{AMA}
\label{sec:convergence_ama}

\cite{Tse1991} provides sufficient conditions to ensure the convergence of AMA. In the following list of assumptions, the functions $f(\bu)$ and $g(\bv)$ and parameters $\bA, \bB,$ and $\bc$ refer to problem \Eqn{split_objective}.
\begin{assumption}[Assumptions B and C in \cite{Tse1991}]
\label{as:Tseng}
\begin{itemize}
\item[(a)] $f(\bu)$ and $g(\bv)$ are convex lower-semicontinuous functions.
\item[(b)] $f(\bu)$ is strongly convex with modulus $\alpha > 0$.
\item[(c)] Problem \Eqn{split_objective} is feasible.
\item[(d)] The function $g(\bv) + \| \bB\bv \|_2^2$ has a minimum.
\item[(e)] The dual of \Eqn{split_objective} has an optimal Lagrange multiplier corresponding to the constraint 
$\bA\bu + \bB\bv = \bc$.
\end{itemize}
\end{assumption}
It is straightforward to verify that the functions and parameters in problem \Eqn{split_objective} satisfy \As{Tseng}. In particular, the strong convexity modulus $\alpha = 1$ for the convex clustering problem. In the derivation of the dual problem given in the Supplemental Materials, we briefly discuss how these assumptions are related to sufficient conditions for ensuring the convergence of the proximal gradient method applied to the dual problem. 
\begin{proposition}[Proposition 2 in \cite{Tse1991}]
\label{prop:ama_convergence}
Under \As{Tseng} the iterates generated by the AMA updates \Eqn{ama_updates}
satisfy the following:
\begin{itemize}
\item[(a)] $\lim_{m \to \infty} \bu^m = \bu^*$,
\item[(b)] $\lim_{m \to \infty} \bB\bv^m = \bc - \bA\bu^*$,
\item[(c)] $\lim_{m \to \infty} \blambda^m = \blambda^*$,
\end{itemize}
provided that $\nu < 2\alpha/\rho(\bA^t\bA)$, where $\rho(\bA^t\bA)$ denotes the largest eigenvalue of $\bA^t\bA$.
\end{proposition}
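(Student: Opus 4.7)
The plan is to adopt the viewpoint, already noted in the discussion following \Alg{AMA}, that AMA is proximal gradient ascent on the dual of \Eqn{split_objective}. From this vantage, the hypothesis $\nu < 2\alpha/\rho(\bA^t\bA)$ is exactly the classical requirement $\nu < 2/L$ for the Lipschitz constant $L$ of the smooth part of the dual, and convergence reduces to a standard proximal gradient theorem. The real work is making the AMA-to-proximal-gradient correspondence precise.

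First I would form the dual $D(\blambda) = \inf_{\bu,\bv}\mathcal{L}_0(\bu,\bv,\blambda)$. Since $\mathcal{L}_0$ is separable in $(\bu,\bv)$ once $\blambda$ is fixed, Fenchel duality yields
\[
D(\blambda) = -f^\star(\bA^t\blambda) - g^\star(\bB^t\blambda) + \langle \blambda, \bc\rangle,
\]
where $f^\star, g^\star$ are Fenchel conjugates. Assumption (a) makes the conjugates proper, closed, and convex; assumption (d) ensures the $\bv$-subproblem defining each AMA iterate is well posed; assumption (e) supplies a dual maximizer $\blambda^\star$. The crucial ingredient is assumption (b): $\alpha$-strong convexity of $f$ is the standard hypothesis under which $f^\star$ is differentiable everywhere with $\alpha^{-1}$-Lipschitz gradient $\nabla f^\star(\by) = \arg\min_\bu\{f(\bu)-\langle \by,\bu\rangle\}$. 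Composing with $\bA^t$ yields a Lipschitz constant $L=\rho(\bA^t\bA)/\alpha$ for the smooth component $\blambda \mapsto f^\star(\bA^t\blambda)$ of $-D$.

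Next I would identify each AMA iterate with its dual-side counterpart. The $\bu$-update gives $\bu^{m+1} = \nabla f^\star(\bA^t\blambda^m)$, and so $\bA\bu^{m+1} - \bc$ is the gradient of the smooth part of $-D$ at $\blambda^m$. Using Moreau's identity together with the optimality conditions of the $\bv$-subproblem (mirroring the manipulations in \Sec{AMA}), the combined $(\bv,\blambda)$-update collapses to the single step
\[
\blambda^{m+1} = \prox_{\nu\,(g^\star \circ \bB^t)}\bigl(\blambda^m + \nu(\bc - \bA\bu^{m+1})\bigr),
\]
which is exactly the proximal gradient step for minimizing $-D$ with step size $\nu$. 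The classical Fej\'er-monotone analysis for proximal gradient on a composite objective with $L$-Lipschitz smooth part then delivers $\blambda^m \to \blambda^\star$ as soon as $\nu \in (0, 2/L)$, establishing conclusion (c).

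To close, I would propagate convergence back to the primal. Continuity of $\nabla f^\star$ gives $\bu^m \to \nabla f^\star(\bA^t\blambda^\star) = \bu^\star$, which is (a). Since $\blambda^{m+1} - \blambda^m = \nu(\bc - \bA\bu^{m+1} - \bB\bv^{m+1})$, convergence of the dual iterates forces $\bc - \bA\bu^m - \bB\bv^m \to \bZero$; combining with (a) yields $\bB\bv^m \to \bc - \bA\bu^\star$, which is (b). The main obstacle in the whole argument is the dual-update identification: verifying that the AMA $(\bv,\blambda)$-recipe truly equals a single proximal gradient step on $D$ requires careful bookkeeping with Moreau's identity, and it is here that assumption (d) is essential, since one needs $\bv^{m+1}$ to be well defined for the identification to be meaningful.
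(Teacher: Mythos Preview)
The paper does not supply its own proof of this proposition: it is quoted verbatim as ``Proposition~2 in \cite{Tse1991}'' and simply invoked as a known result, with the surrounding text devoted to verifying that \As{Tseng} holds for the clustering problem and to interpreting the step-size bound via the graph Laplacian. So there is no proof in the paper to compare against.

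That said, your sketch is the right way to reconstruct Tseng's argument, and it is fully consistent with the paper's own remark that ``AMA is actually performing proximal gradient ascent to maximize the dual problem.'' One technical point deserves care: your claim that the combined $(\bv,\blambda)$-update collapses to $\blambda^{m+1} = \prox_{\nu\,(g^\star \circ \bB^t)}(\cdot)$ is not automatic for a general matrix $\bB$, since the proximal map of a composition with a linear map is not in general the composition of proximal maps. In the clustering problem $\bB = -\bI$ and the identification is clean (as the paper's \Sec{AMA} derivation shows explicitly), but for the abstract statement \Eqn{split_objective} you would either need to restrict $\bB$, argue via the Moreau envelope of $g$ more carefully, or follow Tseng's original direct Fej\'er-monotonicity argument on the dual iterates rather than appealing to a black-box proximal gradient theorem. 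Apart from this bookkeeping issue, the structure of your argument---strong convexity of $f$ gives a $1/\alpha$-Lipschitz $\nabla f^\star$, hence $L = \rho(\bA^t\bA)/\alpha$ for the smooth dual part, then propagate dual convergence back to the primal via $\bu^{m+1} = \nabla f^\star(\bA^t\blambda^m)$ and the multiplier update---is correct.
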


The parameter $\nu$ controlling the gradient step must be strictly less than twice the Lipschitz constant $1/\rho(\bA^t\bA)$.  To gain insight into how to choose $\nu$, let $\varepsilon \leq \binom{n}{2}$ denote the number of edges. Then $\bA = \bPhi \Kron \bI$, where $\bPhi$ is the $\varepsilon \times n$ oriented edge-vertex incidence matrix
\begin{eqnarray*}
\bPhi_{lv} = \begin{cases}
1 & \text{If node $v$ is the head of edge $l$} \\
-1 & \text{If node $v$ is the tail of edge $l$} \\
0 & \text{otherwise.}
\end{cases}
\end{eqnarray*}
Therefore, $\bA^t\bA = \bL \Kron \bI$, where $\bL = \bPhi^t\bPhi$ is the Laplacian matrix of the associated graph.  It is well known that the eigenvalues of $\bZ \Kron \bI$ coincide with the eigenvalues of $\bZ$. See for example Theorem~6 in Chapter 9 of \cite{Mil1987}. Therefore, $\rho(\bA^t\bA) = \rho(\bL)$. In lieu of computing $\rho(\bL)$ numerically, one can bound it by theoretical arguments. In general $\rho(\bL) \leq n$ \citep{AndMor1985}, with equality when the graph is fully connected and $w_{ij} > 0$ for all $i < j$.
Choosing a fixed step size of $\nu < 2/n$ works in practice when there are fewer than 1000 data points and the graph is dense. For a sparse graph with bounded node degrees, the sharper bound
\begin{eqnarray*} 
\rho(\bL) & \leq & \max\{ d(i) + d(j) : (i,j) \in \mathcal{E} \} \label{node_degree_bound}
\end{eqnarray*}
is available, where $d(i)$ is the degree of the $i$th node \citep{AndMor1985}. This bound can be computed quickly in $\mathcal{O}(n + \varepsilon)$ operations. Section \ref{timing_section} demonstrates the overwhelming speed advantage AMA on sparse graphs.

\subsection{ADMM}

Modest convergence results for the ADMM algorithm have been proven under minimal assumptions, which we now restate.
\begin{proposition}
\label{prop:ADMM_convergence}
If the functions $f(\bx)$ and $g(\bx)$ are closed, proper, and convex, and the unaugmented Lagrangian has a saddle point, then the ADMM iterates satisfy
\begin{eqnarray*}
\lim_{m \to \infty} \br^m & = & \bZero \\
\lim_{m \to \infty} \left[ f(\bU^m) + g(\bV^m) \right] & = & F^\star \\
\lim_{m \to \infty} \blambda^m & = & \blambda^*, 
\end{eqnarray*}
where $\br^m = \bc - \bA\bu^m - \bB\bv^m$ denotes the primal residuals and $F^\star$ denotes the minimal objective value of the primal problem.
\end{proposition}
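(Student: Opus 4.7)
The approach is the standard Lyapunov-function argument of \citet{BoyParChu2011}, Appendix~A.3. Let $(\bu^*, \bv^*, \blambda^*)$ be a saddle point of the unaugmented Lagrangian, guaranteed by hypothesis, and let $\br^m = \bc - \bA\bu^m - \bB\bv^m$ denote the primal residual. The candidate Lyapunov function is
\[
V^m \;=\; \tfrac{1}{\nu}\,\lVert \blambda^m - \blambda^* \rVert_2^2 \;+\; \nu\,\lVert \bB(\bv^m - \bv^*) \rVert_2^2 .
\]
It measures simultaneous proximity to $\blambda^*$ in dual space and to $\bv^*$ in the range of $\bB$.

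The first step is to extract subgradient optimality from the two ADMM subproblems in \Eqn{admm_updates} and substitute the multiplier update, yielding
\[
-\bA^t\blambda^{m+1} + \nu\bA^t\bB(\bv^m - \bv^{m+1}) \in \partial f(\bu^{m+1}), \qquad -\bB^t\blambda^{m+1} \in \partial g(\bv^{m+1}) .
\]
Pairing these with the saddle-point inclusions $-\bA^t\blambda^* \in \partial f(\bu^*)$ and $-\bB^t\blambda^* \in \partial g(\bv^*)$, and invoking monotonicity of the subdifferentials of convex functions, routine algebra delivers the fundamental descent inequality
\[
V^{m+1} \;\leq\; V^m - \nu\,\lVert \br^{m+1} \rVert_2^2 \;-\; \nu\,\lVert \bB(\bv^{m+1} - \bv^m) \rVert_2^2 .
\]

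Since $V^m \geq 0$, telescoping gives summability of $\lVert \br^{m+1}\rVert_2^2$ and $\lVert \bB(\bv^{m+1} - \bv^m)\rVert_2^2$, so both tend to zero; this is conclusion (a). For conclusion (b), I would sandwich $f(\bu^m) + g(\bv^m) - F^*$ between two expressions that vanish in the limit: an upper bound obtained from the convexity inequalities attached to the subgradient inclusions above (which involve $\br^m$ and $\bB(\bv^m - \bv^{m-1})$), and the saddle-point lower bound $F^* \leq f(\bu^m) + g(\bv^m) - \langle \blambda^*, \br^m\rangle$. Both right-hand sides tend to zero because $\br^m \to \mathbf{0}$ and $\blambda^m$ is bounded (the latter follows from boundedness of $V^m$).

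The main obstacle is conclusion (c): convergence of $\blambda^m$ to a \emph{specific} saddle-point multiplier, not merely to a cluster point. The Opial-style trick is to notice that the descent inequality holds relative to \emph{every} saddle-point multiplier, so for each such multiplier the associated $V^m$ sequence is bounded and non-increasing, hence convergent. Any cluster point of $\{(\bu^m, \bv^m, \blambda^m)\}$ must itself be a saddle point, since the residual convergence and the subgradient inclusions pass to the limit; a standard uniqueness-of-cluster-point argument then forces the whole sequence $\{\blambda^m\}$ to converge. Note that since neither $\bA$ nor $\bB$ need have full column rank under these bare hypotheses, the analysis cannot be strengthened to assert convergence of $\bu^m$ or $\bv^m$ individually — a gap intrinsic to the proposition and one reason the paper pairs this result with the sharper AMA analysis, where strong convexity of $f$ forces $\bu^m \to \bu^*$.
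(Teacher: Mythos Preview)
The paper does not prove this proposition at all: immediately after stating it, the authors write ``Proofs of the above result can be found in the references \citep{BoyParChu2011,EckBer1992,Gab1983}.'' Your sketch is precisely the Lyapunov argument from Appendix~A of \citet{BoyParChu2011}, one of the cited sources, so in that sense you are aligned with what the paper defers to.

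Two small points. First, in your argument for conclusion~(c), you pass to cluster points of the full triple $(\bu^m,\bv^m,\blambda^m)$, but under the bare hypotheses only $\blambda^m$ and $\bB\bv^m$ are known to be bounded (from $V^m$), not $\bu^m$ or $\bv^m$ individually. The Opial/Fej\'er argument should be run on $(\blambda^m,\bB\bv^m)$ instead; the conclusion still goes through because the optimality conditions you need at a cluster point involve only $\bA^t\blambda$ and $\bB^t\blambda$. Second, your closing remark is slightly off: the paper does not rely on the AMA analysis to close the gap on $\bu^m$ convergence. Rather, the very next proposition in the paper supplies a direct argument, specific to the clustering objective, that $\bU^m \to \bU^*$ by exploiting strict convexity and coercivity of $F_\gamma$ together with the residual and objective convergence already furnished here.
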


Proofs of the above result can be found in the references \citep{BoyParChu2011,EckBer1992,Gab1983}. 
Note, however, the above results do not guarantee that the iterates $\bU^m$ converge to $\bU^\star$.
Since the
convex clustering criterion $F_\gamma(\bU)$ defined by equation \Eqn{objective_function} is strictly convex and coercive, we next show that we have the stronger result that the ADMM iterate sequence converges to the unique global minimizer $\bU^*$ of $F_\gamma(\bU)$.

\begin{proposition}
The iterates $\bU^m$ in \Alg{ADMM} converge to the unique global minimizer $\bU^*$ of the clustering
criterion $F_\gamma(\bU)$.
\end{proposition}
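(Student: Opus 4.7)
The plan is to upgrade the generic ADMM convergence guarantees from \Prop{ADMM_convergence} into full convergence of $\bU^m$ to $\bU^*$ by exploiting the strict convexity and coercivity of $F_\gamma$. Write $f(\bU) = \tfrac{1}{2}\sum_i \|\bx_i - \bu_i\|_2^2$ and $g(\bV) = \gamma \sum_{l\in\mathcal{E}} w_l \|\bv_l\|$, so that the split problem \Eqn{split_objective_cluster} is equivalent to minimizing $F_\gamma(\bU)$: any feasible $(\bU, \bV)$ satisfies $\bv_l = \bu_{l_1} - \bu_{l_2}$, giving $f(\bU)+g(\bV) = F_\gamma(\bU)$. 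In particular $F^\star = F_\gamma(\bU^*)$.

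Next, I would show that $F_\gamma(\bU^m) \to F^\star$. By \Prop{ADMM_convergence} the primal residuals $\br_l^m := \bv_l^m - \bu_{l_1}^m + \bu_{l_2}^m$ satisfy $\br_l^m \to \bZero$, and the reverse triangle inequality applied to each penalty norm yields
\begin{equation*}
\bigl| g(\bV^m) - \gamma \textstyle\sum_{l\in\mathcal{E}} w_l \|\bu_{l_1}^m - \bu_{l_2}^m\| \bigr| \;\leq\; \gamma \textstyle\sum_{l\in\mathcal{E}} w_l \|\br_l^m\| \;\to\; 0.
\end{equation*}
Combined with the objective convergence $f(\bU^m) + g(\bV^m) \to F^\star$ from \Prop{ADMM_convergence}, this gives $F_\gamma(\bU^m) \to F^\star = F_\gamma(\bU^*)$.

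Finally, I would run a standard subsequence argument. Because $F_\gamma$ is coercive and the scalar sequence $\{F_\gamma(\bU^m)\}$ is bounded, the iterates $\{\bU^m\}$ lie in a bounded level set of $F_\gamma$ and hence are bounded. Pick any subsequence $\bU^{m_k}$; by Bolzano–Weierstrass it has a convergent sub-subsequence $\bU^{m_{k_j}} \to \hat{\bU}$. Continuity of $F_\gamma$ gives $F_\gamma(\hat{\bU}) = F_\gamma(\bU^*)$, and strict convexity of $F_\gamma$ (noted already in the introduction) forces $\hat{\bU} = \bU^*$. Since every subsequence of $\{\bU^m\}$ admits a further subsequence converging to the same limit $\bU^*$, the full sequence converges: $\bU^m \to \bU^*$.

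The only nontrivial step is the second one, namely passing from the convergence of the \emph{split} objective $f(\bU^m)+g(\bV^m)$ to the convergence of the \emph{original} objective $F_\gamma(\bU^m)$; the reverse-triangle-inequality bound above, together with vanishing primal residuals, is exactly what bridges this gap. Everything else is a routine coercivity-plus-strict-convexity argument.
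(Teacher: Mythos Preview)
Your proof is correct and follows essentially the same route as the paper's: both use the vanishing primal residuals from \Prop{ADMM_convergence} together with strict convexity and coercivity of $F_\gamma$ to run a subsequence argument forcing every limit point to be $\bU^*$. The only cosmetic difference is that you first establish $F_\gamma(\bU^m)\to F^\star$ along the whole sequence via the reverse triangle inequality and then invoke coercivity of $F_\gamma$ to bound $\{\bU^m\}$, whereas the paper bounds $(\bU^m,\bV^m)$ jointly using coercivity of the split objective $H_\gamma(\bU,\bV)=f(\bU)+g(\bV)$ and passes to feasibility only at the subsequential limit; the underlying idea is the same.
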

\begin{proof}
The conditions required by \Prop{ADMM_convergence} are obviously met by $F_\gamma(\bU)$. In particular, the unaugmented Lagrangian possesses a saddle point since the primal problem has a global minimizer. To validate
the conjectured limit, we first argue that the iterates $(\bU^m,\bV^m)$ are bounded. If on the contrary some subsequence is unbounded, then passing to the limit along this subsequence contradicts the limit 
\begin{eqnarray}
\lim_{m \to \infty} H_{\gamma}(\bU^m,\bV^m) & = & F_{\gamma}(\bU^*) \label{eq:fgamma_limit}
\end{eqnarray}
guaranteed by \Prop{ADMM_convergence} for the continuous function
\begin{eqnarray*}
H_{\gamma}(\bU,\bV) & = & \frac{1}{2}\sum_{i=1}^n \|\bx_i-\bu_i\|_2^2 + \gamma \sum_{l}w_{l} \| \bv_l \| .\end{eqnarray*}
To prove convergence of the sequence $(\bU^m,\bV^m)$, it therefore suffices to check that every limit point coincides with the minimum point of $F_\gamma(\bU)$. Let $(\bU^{m_n},\bV^{m_n})$ be a subsequence with limit $(\btildeU, \btildeV)$. According to \Prop{ADMM_convergence}, the differences $\bu^m_{l_1} - \bu^m_{l_2} - \bv^m_l$ tend to $\bZero$.  Thus, the limit $(\btildeU,\btildeV)$ is feasible.  Furthermore,
\begin{eqnarray*}
\lim_{n \to \infty} H_\gamma(\bU^{m_n},\bV^{m_n}) & = & H_\gamma(\btildeU,\btildeV) = F_\gamma(\btildeU).
\end{eqnarray*}
This limit contradicts the limit \Eqn{fgamma_limit} unless $F_\gamma(\btildeU) = F_\gamma(\bU^*)$. Because $\bU^*$ uniquely minimizes $F_\gamma(\bU)$, it follows that $\btildeU = \bU^*$. 
\end{proof}

\section{Acceleration}
\label{sec:acceleration}

Both AMA and ADMM admit acceleration at little additional computational cost. Given that AMA is a proximal gradient algorithm, \citet{GolODSet2012} show that it can be effectively accelerated via Nesterov's method \citep{BecTeb2009}. \Alg{AMA_fast} conveys the accelerated AMA method. \citet{GolODSet2012} also present methods for accelerating ADMM not considered in this paper.

\begin{algorithm}[t]
  \caption{Fast AMA}
  \label{alg:AMA_fast}
  Initialize $\blambda^{-1} = \btildelambda^{0}, \alpha_0 = 1$
  \begin{algorithmic}[1]
    \For{$m = 0, 1, 2, \ldots$} 
    	    \For{$i = 1, \ldots, n$}
	    \State $\bDelta^{m}_i = \sum_{l_1 = i} \blambda^{m-1}_l - \sum_{l_2 = i} \blambda_l^{m-1}$
	    \EndFor
	    \ForAll{$l$}
		\State $\bg_l^{m} = \bx_{l_1} - \bx_{l_2} + \bDelta_{l_1}^{m} - \bDelta_{l_2}^{m}$
		\State $\btildelambda^m_l = P_{C_l}(\blambda^{m-1}_l - \nu \bg_l^{m})$
	    \EndFor
	    \State $\alpha_{m} = (1 + \sqrt{1 + 4\alpha_{m-1}^2})/2$
	    	\State $\blambda^{m+1} = \btildelambda^{m} + \frac{\alpha_{m-1}}{\alpha_{m}}[\btildelambda^m - \btildelambda^{m-1}]$
    \EndFor
  \end{algorithmic}
\end{algorithm}

 \section{Computational Complexity}
 \label{sec:complexity}

\subsection{AMA}
In the sequel, we apply existing theory on the computational complexity of AMA to estimate the total number of iterations required by our AMA algorithm. The amount of work per iteration is specific to the variable splitting formulation of the clustering problem and depends on the sparsity of the matrix $\bA$ in the clustering problem. Suppose we wish to compute for a given $\gamma$ a solution such that the duality gap is at most $\tau$. We start by tallying the computational burden for a single round of AMA updates. Inspection of \Alg{AMA} shows that
computing all $\bDelta_i$ requires $p(2\varepsilon - n)$ total additions and subtractions. Computing all vectors
$\bg_l$ in \Alg{AMA} takes $\mathcal{O}(\varepsilon p)$ operations, and taking the subsequent gradient step also costs $\mathcal{O}(\varepsilon p)$ operations. Computing the needed projections costs $\mathcal{O}(\varepsilon p)$ operations for the $\ell_1$ and $\ell_2$ norms and $\mathcal{O}(\varepsilon p\log p)$ operations for the $\ell_\infty$ norm. Finally computing the duality gap costs $\mathcal{O}(np + \varepsilon p)$ operations. The assumption that $n$ is $\mathcal{O}(\varepsilon)$ entails smaller costs. A single iteration with gap checking then costs just $\mathcal{O}(\varepsilon p)$ operations for the $\ell_1$ and $\ell_2$ norms and 
$\mathcal{O}(\varepsilon p \log p)$ operations for the $\ell_\infty$ norm. 

Estimation of the number of iterations until convergence for proximal gradient descent and its Nesterov variant complete our analysis. The $np \times \varepsilon p$ matrix $\bA^t$ is typically short and fat. Consequently, the function $f^\star(\bA^t\blambda)$ is not strongly convex, 
and the best known convergence bounds for the proximal gradient method and its accelerated variant are sublinear \citep{BecTeb2009}. Specifically we have the following non-asymptotic bounds on the convergence of the objective values:
\begin{eqnarray*}
     D_\gamma(\blambda^*) - D_\gamma(\blambda^m) &  \le & \frac{\rho(\bA^t\bA) \|\blambda^* - \blambda^0\|_{2}^2}{2m}
\end{eqnarray*}
for the unaccelerated proximal gradient ascent and
\begin{eqnarray*}
     D_\gamma(\blambda^*) - D_\gamma(\blambda^m) &  \le & \frac{2 \rho(\bA^t\bA) \|\blambda^* - \blambda^0\|_{2}^2}{(m+1)^2},
\end{eqnarray*}
for its Nesterov accelerated alternative. Thus taking into account operations per iteration, we see that the  unaccelerated version and acceleration algorithms respectively require a computational effort of $\mathcal{O}(\frac{\varepsilon p}{\tau})$ and $\mathcal{O}(\frac{\varepsilon p}{\sqrt{\tau}})$ respectively for the $\ell_1$ and $\ell_2$ norms to attain a duality gap less than $\tau$. These bounds are respectively $\mathcal{O}(\frac{\varepsilon p\log p}{\tau})$ and $\mathcal{O}(\frac{\varepsilon p\log p)}{\sqrt{\tau}})$ for the $\ell_\infty$ norm. Total storage is $\mathcal{O}(p\varepsilon + np)$. In the worst case $\varepsilon$ is $\binom{n}{2}$. However, if we limit a node's connectivity to its $k$ nearest neighbors, then $\varepsilon$ is $\mathcal{O}(kn)$. Thus, the computational complexity of the problem in the worst case is quadratic in the number of points $n$ and linear under the restriction to $k$-nearest neighbors connectivity. The storage is quadratic in $n$ in the worst case and linear in $n$ under the $k$-nearest neighbors restriction. Thus, limiting a point's connectivity to its $k$-nearest neighbors renders both the storage requirements and operation counts linear in the problem size, namely $\mathcal{O}(knp)$.

\subsection{ADMM}
We have two cases to consider. First consider the explicit updates for outlined in \Alg{ADMM}, in which the edge set $\mathcal{E}$ contains every possible node pairing. By nearly identical arguments as above, the complexity of a single round of ADMM updates with primal and dual residual calculation requires $\mathcal{O}(n^2 p)$ operations for the $\ell_1$ and $\ell_2$ norms and $\mathcal{O}(n^2 p\log p)$ operations for the $\ell_\infty$ norm. Like AMA, it has been established that $\mathcal{O}(1/\tau)$ ADMM iterations are required to obtain an $\tau$-suboptimal solution \citep{HeYua2012}. Thus, the ADMM algorithm using explicit updates requires the same computational effort as AMA in its worst case, namely when all pairs of centroids are shrunk together. Moreover, the storage requirements are $\mathcal{O}(p n^2 + np)$.

The situation does not improve by much when we consider the more storage frugal alternative in which $\mathcal{E}$ contains only node pairings corresponding to non-zero weights. In this case, the variables $\bLambda$ and $\bV$ have only as many columns as there are non-zero weights. Now the storage requirements are $\mathcal{O}(p\varepsilon + np)$ like AMA, but the cost of updating $\bU$, the most computationally demanding step, remains quadratic in $n$. Recall we need to solve a linear system of equations (\ref{eq:u_update_linear_system})
\begin{eqnarray*}
\bU \bM = \bX + \sum_{l \in \mathcal{E}} \btildev_{l} (\be_{l_1} - \be_{l_2})^t,
\end{eqnarray*}
where $\bM \in \Real^{n \times n}$. Since $\bM$ is positive definite and does not change throughout the ADMM iterations, the prudent course of action is to compute and cache its Cholesky factorization. The factorization requires $\mathcal{O}(n^3)$ operations to calculate but that cost can be amortized across the repeated ADMM updates. With the Cholesky factorization in hand, we can update each row of $\bU$ by solving two sets of $n$-by-$n$ triangular systems of equations, which together requires $\mathcal{O}(n^2)$ operations. Since $\bU$ has $p$ rows, the total amount of work to update $\bU$ is $\mathcal{O}(n^2 p)$. Therefore, the overall amount of work per ADMM iteration is $\mathcal{O}(n^2 p + \varepsilon p)$ operations for the $\ell_1$ and $\ell_2$ norms and $\mathcal{O}(n^2 p + \varepsilon p\log p)$ operations for the $\ell_\infty$ norm. Thus, in stark constrast to AMA, both ADMM approaches grow quadratically, either in storage requirements or computational costs, regardless of how we might limit the size of the edge set $\mathcal{E}$.

\section{Practical Implementation}
\label{sec:practice}

This section addresses practical issues of algorithm implementation.

\subsection{Choosing weights}
\label{sec:weights}

The choice of the weights can dramatically affect the quality of the clustering path. We set the value of the weight between the $i$th and $j$th points to be $w_{ij} = \iota^k_{\{i,j\}} \exp(-\phi \| \bx_i - \bx_j \|_2^2)$, where $\iota^k_{\{i,j\}}$ is 1 if $j$ is among $i$'s $k$-nearest-neighbors or vice versa and 0 otherwise. The second factor is a Gaussian kernel that slows the coalescence of distant points. The constant $\phi$ is nonnegative; the value $\phi = 0$ corresponds to uniform weights.  As noted earlier, limiting positive weights to nearest neighbors improves both computational efficiency and clustering quality.  Although the two factors defining the weights act similarly, their combination increases the sensitivity of the clustering path to the local density of the data. 

\subsection{Making cluster assignments}
\label{sec:cluster_assignments}

We would like to be able to read off which centroids have fused as the regularization increases, namely determine clustering assignments as a function of $\gamma$. For both ADMM and AMA, such assignments can be performed in $\mathcal{O}(n)$ operations, using the differences variable $\bV$. In the case of AMA, where we do not store a running estimate of $\bV$, we compute $\bV$ using (\ref{eq:update_v}) after the algorithm terminates. In any case, once we have the variable $\bV$, we simply apply bread-first search to identify the connected components of the following graph induced by the $\bV$. The graph identifies a node with every data point and places an edge between the $l$th pair of points if and only if $\bv_l = \bZero$. Each connected component corresponds to a cluster. Note that the graph described here is a function of $\bV$ and is unrelated to the graph described earlier which is a function of the weights $w_{ij}$.

\section{Numerical Experiments}
\label{sec:experiments}

We now report numerical experiments on convex clustering for a synthetic data set and three real data sets. In particular, we focus on how the choice of the weights $w_{ij}$ affects the quality of the clustering solution. Prior research on this question is limited. Both Lindsten et al.\@ and Hocking et al.\@ suggest weights derived from Gaussian kernels and $k$-nearest neighbors. Because Hocking et al.\@ try only Gaussian kernels, in this section we follow up on their untested suggestion of combining Gaussian kernels and $k$-nearest neighbors.

We also compare the run times of our splitting methods to the run times of the subgradient algorithm employed by Hocking et al.\@ for $\ell_2$ paths. We focus our attention on solving the $\ell_2$ path since the rotational invariance of the 2-norm makes it a robust choice in practice. They provide R and C++ code for their algorithms. Our algorithms are implemented in R and C. To make a fair comparison,
we run our algorithm until it reaches a primal objective value that is less than or equal to the primal objective value obtained by the subgradient algorithm. To be specific, we first run the Hocking et al.\@ code to generate a clusterpath and record the sequence of $\gamma$'s generated by the Hocking et al.\@ code. We then run our algorithms over the same sequence of $\gamma$'s and stop once our primal objective value falls below those of Hocking et al.'s. We also keep the native stopping rule computations employed by our splitting methods, namely the dual loss calculations for AMA and residual calculations for ADMM. Since AMA already calculates the primal loss, this is not an additional burden. Although convergence monitoring creates additional work for ADMM, the added primal loss calculation at worst only changes the constant in the complexity bound. This follows since computing the primal loss requires $\mathcal{O}(np + \varepsilon p)$ operations to compute.

\subsection{Qualitative Comparisons}

Our next few examples demonstrate how the character of the solution paths can vary drastically with the choice of weights $w_{ij}$.

\subsubsection{Two Half Moons}
\label{sec:moons}

Consider the standard simulated data of two interlocking half moons in $\Real^2$ composed of 100 points each. \Fig{halfmoons} shows four convex clustering paths computed assuming two different numbers of nearest neighbors (10 and 50) and two different kernel constants $\phi$ (0 and 0.5).  The upper right panel makes it evident that limiting the number of nearest neighbors ($k=10$) and imposing non-uniform Gaussian kernel weights ($\phi=0.5$) produce the best clustering path. Using too many neighbors and assuming uniform weights results in little agglomerative clustering until late in the clustering path (lower left panel). The two intermediate cases diverge in interesting ways. The hardest set of points to cluster are the points in the upper half moon's right tip and the lower half moon's left tip.  Limiting the number of nearest neighbors and omitting the Gaussian kernel (upper left panel) correctly agglomerates the easier points, but waffles on the harder points, agglomerating them only at the very end when all points coalesce at the grand mean. Conversely, using too many neighbors and the Gaussian kernel (lower right panel) leads to a clustering path that does not hedge but incorrectly assigns the harder points.

\begin{figure}
\centering
\includegraphics[scale=0.45]{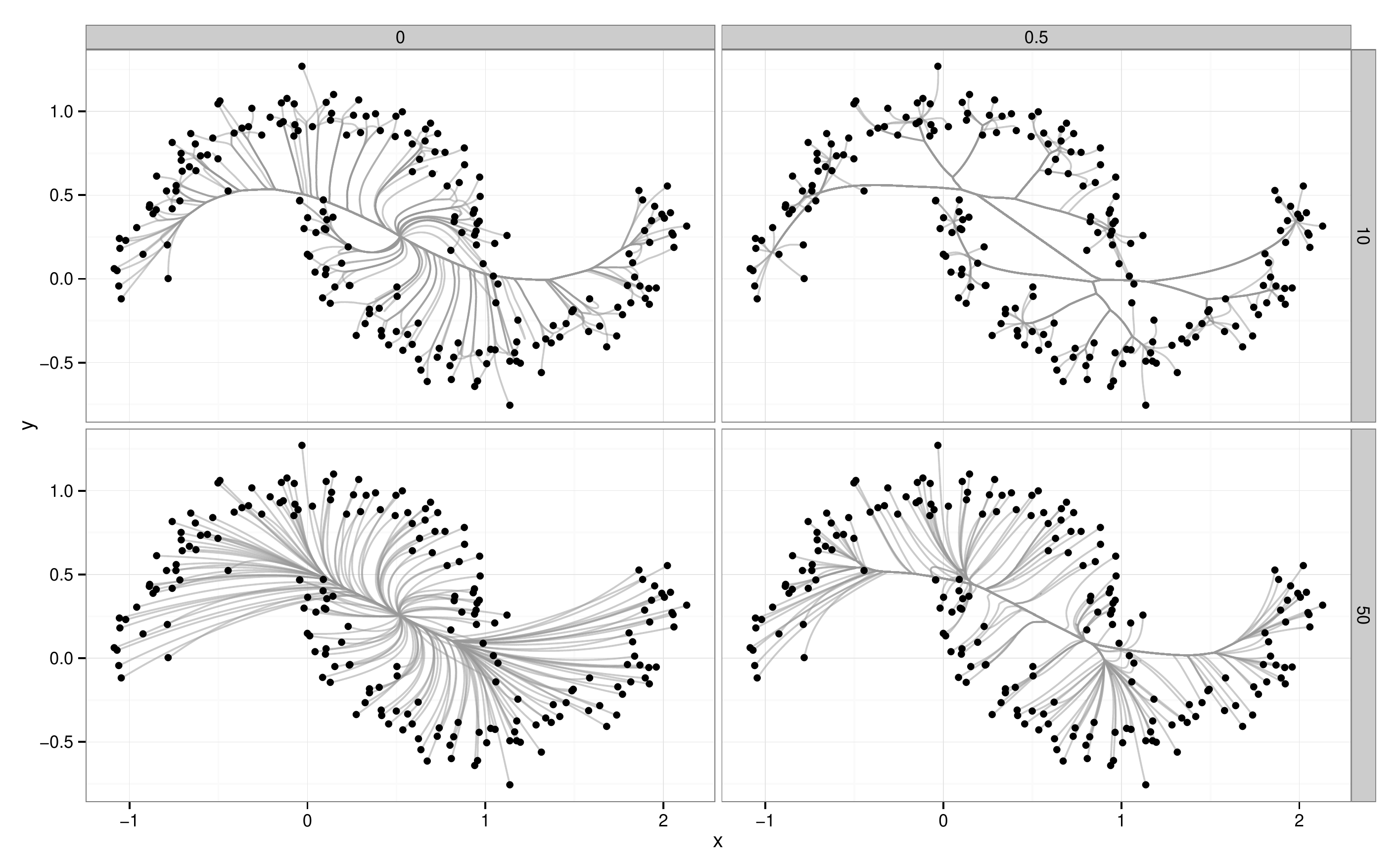}
\caption{Halfmoons Example: The first and second rows show results using $k=10$ and $50$ nearest neighbors respectively. The first and second columns show results using $\phi = 0$ and $0.5$ respectively.}
\label{fig:halfmoons}
\end{figure}

\subsubsection{Fisher's Iris Data} 
Fisher's Iris data \citep{Fis1936} consists of four measurements on 150 samples of iris flowers. There are three species present: setosa, versicolor, and virginica. \Fig{iris} shows the resulting clustering paths under two different choices of weights. On the left $w_{ij} = 1$ for all $i < j$, and on the right we use 5-nearest neighbors and $\phi = 4$.
Since there are four variables, to visualize results we project the data and the fitted clustering paths onto the first two principal components of the data. Again we see that more sensible clustering is observed when we choose weights to be sensitive to the local data density. We even get some separation between the overlapping species virginica and versicolor.

\subsubsection{Senate Voting} 
We consider Senate voting in 2001 on a subset of 15 issues selected by Americans for Democratic Action \citep{LeeMai2009,Dem2002}.
The data is binary. We limited our study to the 29 senators with unique voting records. The issues ranged over a wide spectrum: domestic, foreign, economic, military, environmental and social concerns. The final group of senators included 15 Democrats, 13 Republicans, and 1 Independent. \Fig{senate} shows the resulting clustering paths under two different choices of weights. On the left $w_{ij} = 1$ for all $i < j$, and on the right we use 15-nearest neighbors and $\phi = 0.5$. As observed previously, better clustering is observed when we choose the weights to be sensitive to the local data density. In particular, we get clear party separation. Note that we identify an outlying Democrat in Zel Miller and that the clustering seen agrees well with what PCA exposes.

\subsubsection{Dentition of mammals} Finally, we consider the problem of clustering mammals based on their dentition \citep{LeeMai2009,Har1975}. Eight different kinds of teeth are tallied up for each mammal: the number of top incisors, bottom incisors, top canines, bottom canines, top premolars, bottom premolars, top molars, and bottom molars. Again we removed observations with teeth distributions that were not unique, leaving us with 27 mammals. \Fig{mammals} shows the resulting clustering paths under two different choices of weights. On the left $w_{ij} = 1$ for all $i < j$, and on the right we use 5-nearest neighbors and $\phi = 0.5$. Once again, weights sensitive to the local density give superior results. In contrast to the iris and Senate data, the cluster path gives a different and perhaps more sensible solution than projections onto the first two components PCA. For example, the brown bat is considered more similar to the house bat and red bat, even though it is closer in the first two PCA coordinates to the coyote and oppossum.

\begin{figure}
\centering
\begin{tabular}{c}
\subfloat[Iris Data: Panel on the right (Set B) used $k=5$ nearest neighbors and $\phi=4$.]{\label{fig:iris}
\includegraphics[scale=0.2725]{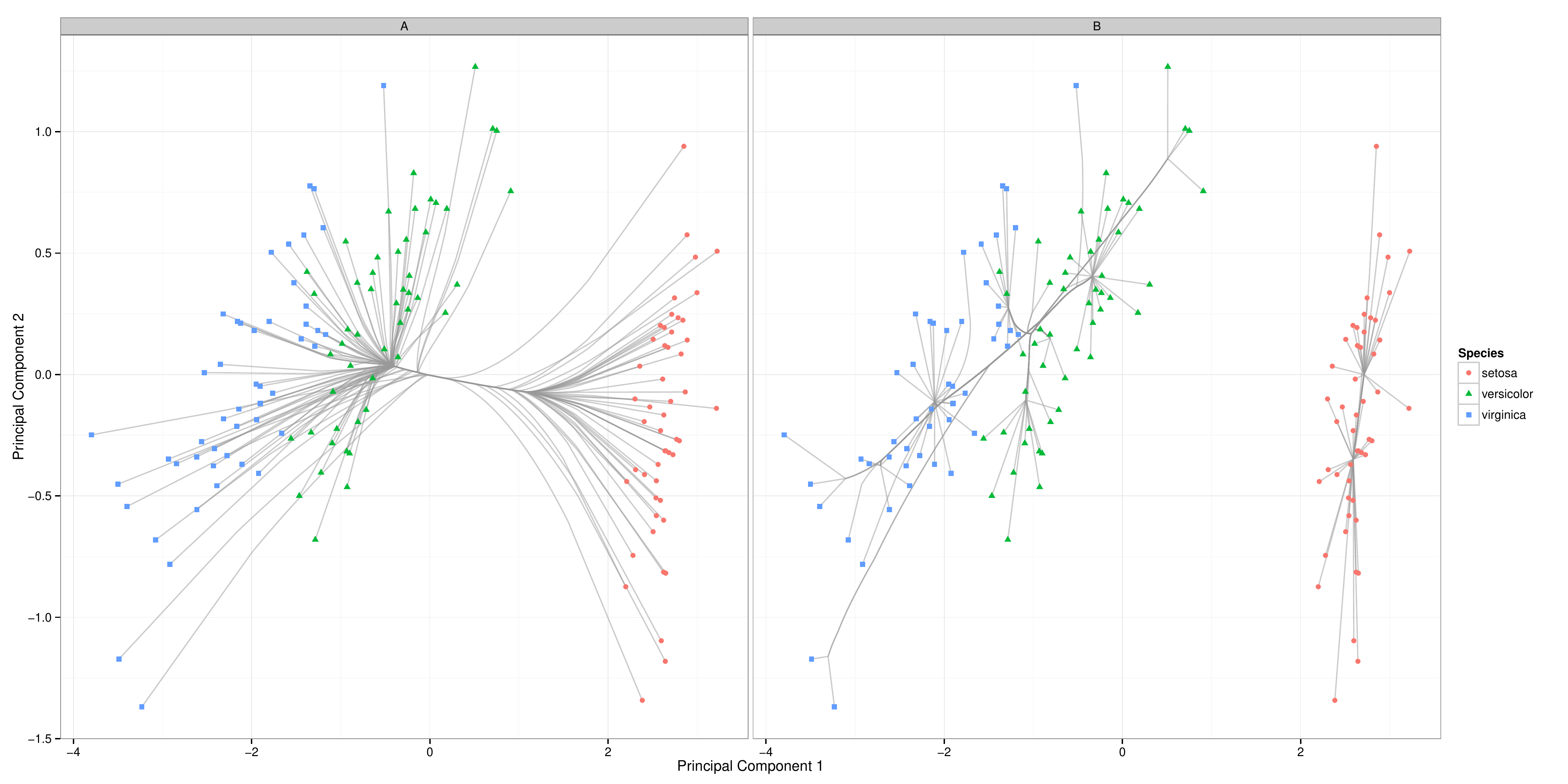}}\\
\subfloat[Senate: Panel on the right (Set B) used $k=15$ nearest neighbors and $\phi=0.5$.]{\label{fig:senate}
\includegraphics[scale=0.2725]{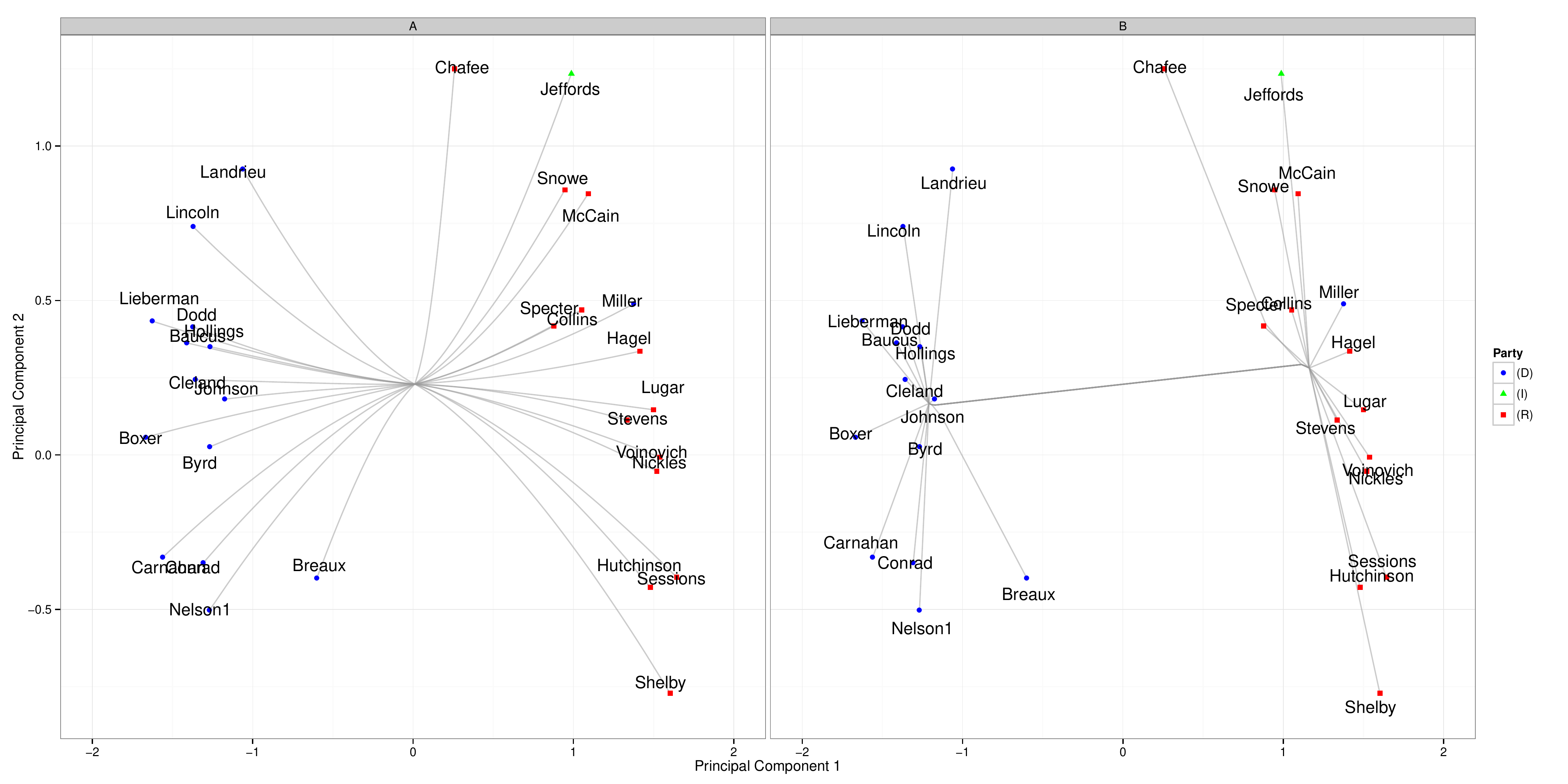}} \\
\subfloat[Mammal Data: Panel on the right (Set B) used $k=5$ nearest neighbors and $\phi=0.5$.]{\label{fig:mammals}
\includegraphics[scale=0.2625]{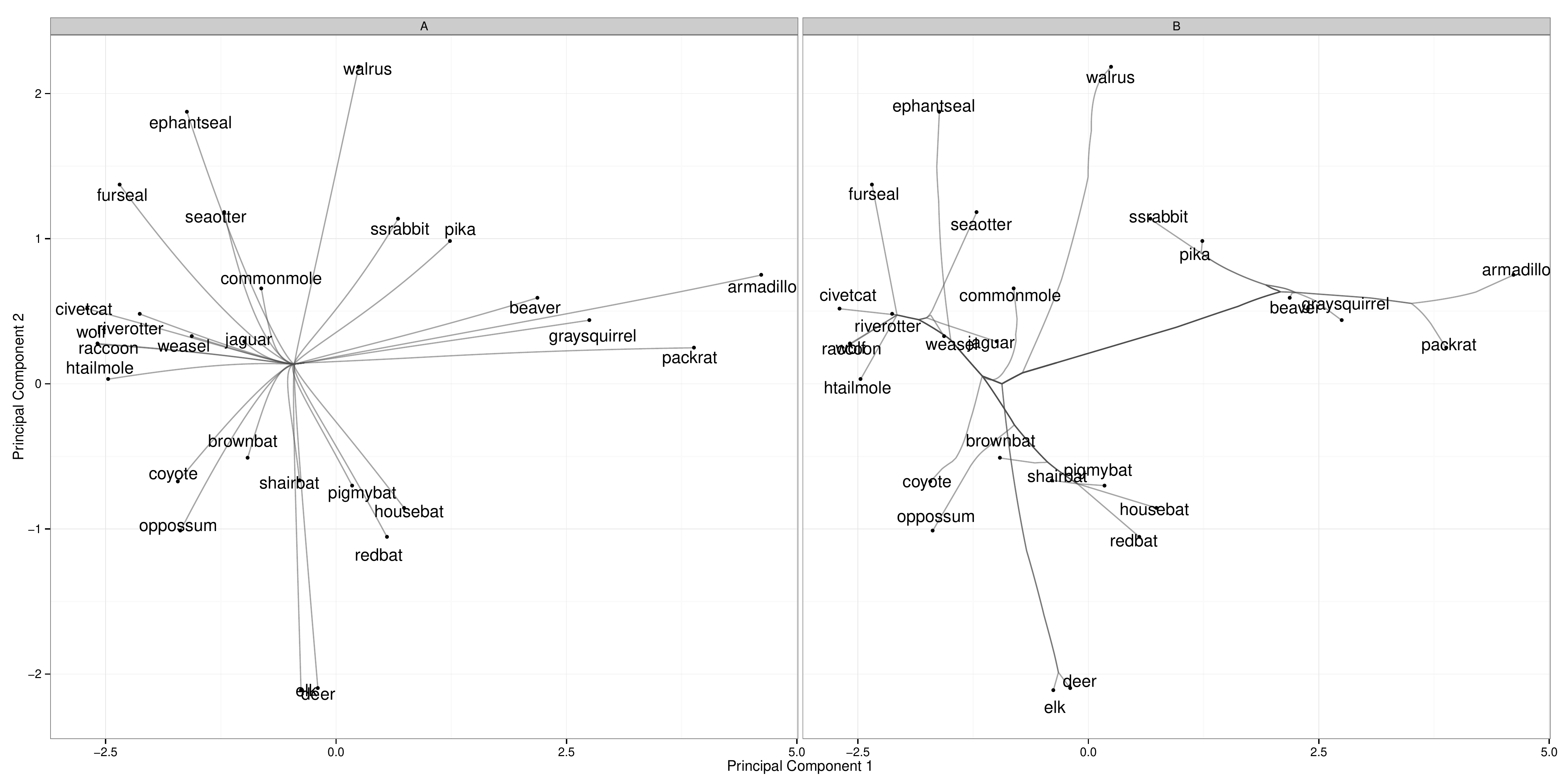}} \\
\end{tabular}
\caption{Clustering path under the $\ell_2$ norm. All panels on the left (Set A) used $w_{ij}=1$ for all $i < j$.}
\end{figure}

\subsection{Timing Comparisons \label{timing_section}}

We now present results on two batches of experiments, with dense weights in the first batch and sparse ones in the second. For the first set of experiments, we compared the run times of the subgradient descent algorithm of Hocking et al.\@, ADMM, and accelerated AMA on 10 replicates of simulated data consisting of 100, 200, 300, 400, and 500 points in $\Real^2$ drawn from a multivariate standard normal.  We limited our study to at most 500 points because the subgradient algorithm took several hours on a single realization of 500 points. Limiting the number of data points allowed us to use the simpler, but less storage efficient, ADMM formulation. For AMA, we fixed the step size at $\nu = 1/n$. For all tests, we assigned full-connectivity weights based on $\iota^k_{\{i,j\}} =1$ and $\phi = -2$. The parameter $\phi$ was chosen to ensure that the smallest weight was bounded safely away from zero. The full-connectivity assumption illustrates the superiority of AMA even under its least favorable circumstances. To trace out the entire clusterpath, we ran the Hocking subgradient algorithm to completion and invoked its default stopping criterion, namely a gradient with an $\ell_2$ norm below $0.001$. As noted earlier, we stopped our ADMM and AMA algorithms once their centroid iterates achieved a primal loss less than or equal to that achieved by the subgradient algorithm.

Table~\ref{tab:L2} shows the resulting mean times in seconds, and Figure~\ref{fig:times} shows box-plots of the square root of run time against the number of data points $n$. All three algorithms scale quadratically in the number of points. This is expected for ADMM and AMA because all weights $w_{ij}$ are positive. Nonetheless, the three algorithms possess different rate constants, with accelerated AMA possessing the slowest median growth, followed by the subgradient algorithm and ADMM. Again, to ensure fair comparisons with the subgradient algorithm, we required ADMM to make extra primal loss computations. This change tends to inflates its rate constant. Even so, we see that the spread in run times for the subgradient algorithm becomes very wide at 500 points, so much so that on some realizations even ADMM, with its additional overhead, is faster. In summary, we see that fast AMA leads to affordable computation times, on the order of minutes for hundreds of data points, in contrast to subgradient descent, which incurs run times on the order of hours for 400 to 500 data points.

In the second batch of experiments, the same set up is retained except for assignments of weights and step length choice for AMA. We used $\phi = -2$ again, but this time we zeroed out all weights except those corresponding to the $k = \frac{n}{4}$ nearest neighbors of each point. 
For AMA we used step sizes based on the bound (\ref{node_degree_bound}). Table~\ref{tab:L3} shows the resulting mean run times in seconds, and Figure~\ref{fig:times_sparse} shows box-plots of the square root of run time against the number of data points $n$. As attested by the shorter run times for all three algorithms, incorporation of sparse weights appears to make the problems easier to solve. Sparse weights also make ADMM competitive with the subgradient method for small to modest $n$. Even more noteworthy is the pronounced speed advantage of AMA over the other two algorithms for large $n$.  When clustering 500 points, AMA requires on average a mere 7 seconds compared to 6 to 7 minutes for the subgradient and ADMM algorithms.


\begin{table}[t]
  \centering
  \begin{tabular}{cccccc}
 & 100 & 200 & 300 & 400 & 500 \\ \hline
Subgradient & 44.40 & 287.86 &  2361.84 & 3231.21 & 13895.50\\
AMA & 16.09 & 71.67 & 295.23 & 542.45 & 1109.67 \\
ADMM & 109.13 & 922.72 & 3322.83 & 7127.22 & 13087.09 \\
  \end{tabular}
  \caption{Timing comparison under the $\ell_2$ norm: Dense weights. Mean run times are in seconds. Different methods are listed on each row. Each column reports times for varying number of points.}
  \label{tab:L2}
\end{table}

\begin{figure}
\centering
\includegraphics[scale=0.55]{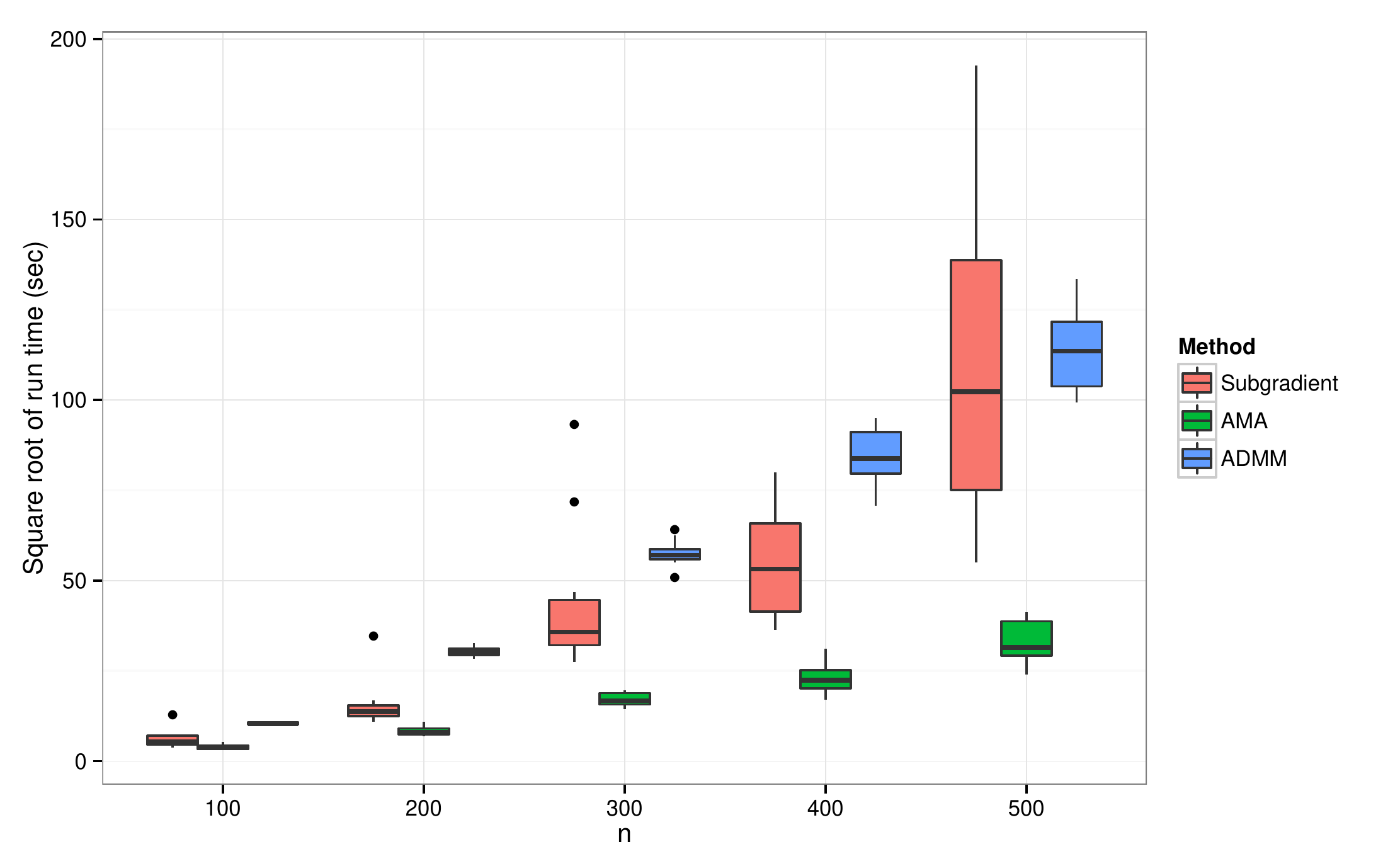}
\caption{Comparison of run times: Dense weights. The square root of the time is plotted against the number of points clustered.}
\label{fig:times}
\end{figure}

\begin{table}[t]
  \centering
  \begin{tabular}{cccccc}
 & 100 & 200 & 300 & 400 & 500 \\ \hline
Subgradient &  6.52 & 37.42 & 161.68 & 437.32 & 386.45 \\
AMA &  1.50 & 2.94 & 4.46 & 6.02 & 7.44 \\
ADMM &  5.42 & 30.93 & 88.63 & 192.54 & 436.49 \\
  \end{tabular}
  \caption{Timing comparison under the $\ell_2$ norm: Sparse weights. Mean run times are in seconds. Different methods are listed on each row. Each column reports times for varying number of points.}
  \label{tab:L3}
\end{table}

\begin{figure}
\centering
\includegraphics[scale=0.55]{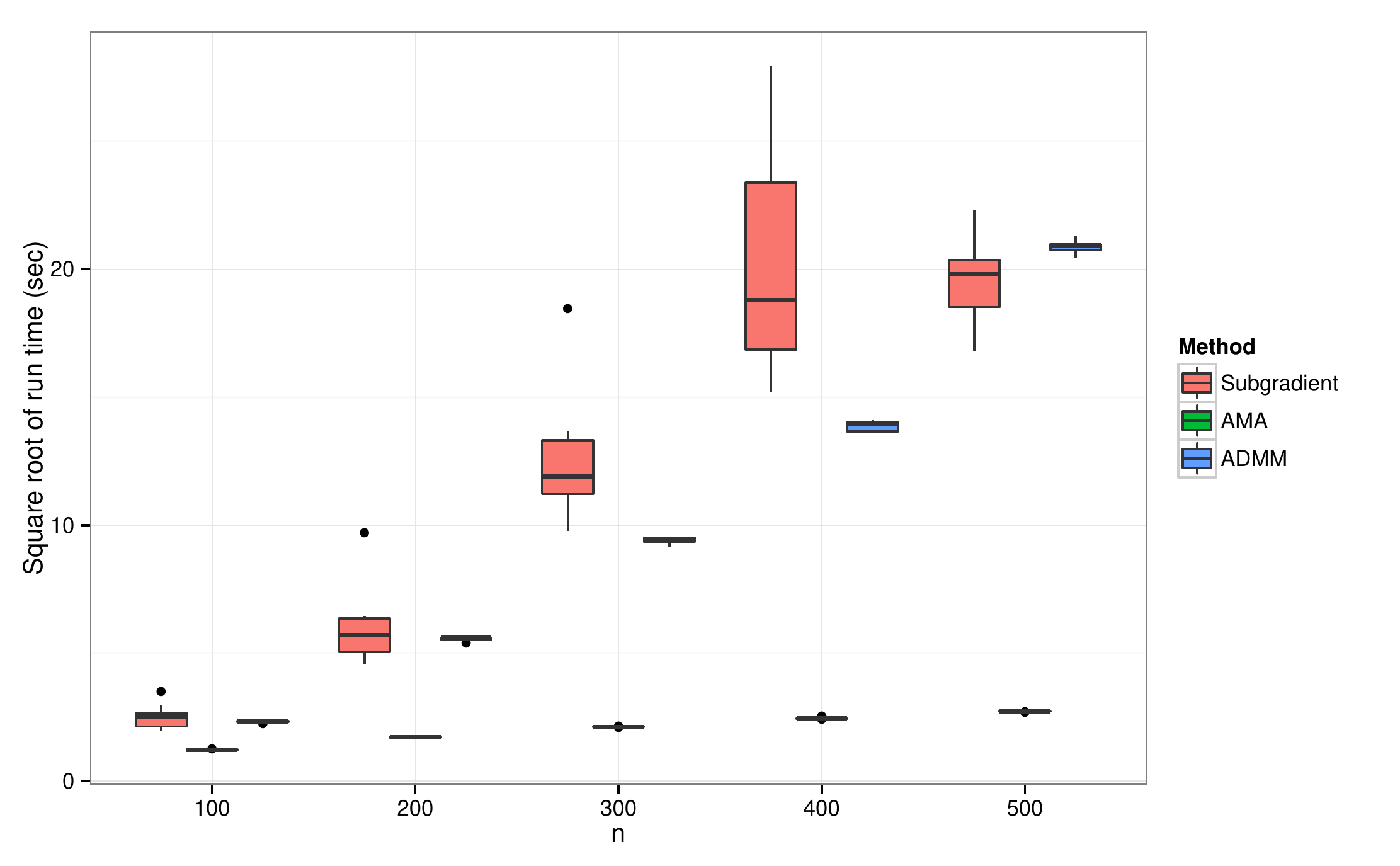}
\caption{Comparison of run times: Sparse weights. The square root of the time is plotted against the number of points clustered.}
\label{fig:times_sparse}
\end{figure}

\section{Conclusion \& Future Work}
\label{sec:conclusion}

In this paper, we introduce two splitting algorithms for solving the convex clustering problem. The splitting perspective encourages path following, one of the chief benefits of convex clustering. The splitting perspective also permits centroid penalties to invoke an arbitrary norm.  The only requirement is that the proximal map for the norm be readily computable. Equivalently, projection onto the unit ball of the dual norm should be straightforward. Because proximal maps and projection operators are generally well understood, it is possible for us to quantify the computational complexity and convergence properties of our algorithms.

It is noteworthy that ADMM did not fare as well as AMA. ADMM has become quite popular in machine learning circles in recent years. Applying variable splitting and using ADMM to iteratively solve 
the convex clustering problem seemed like an obvious and natural initial strategy. Only later during our study did we implement the less favored AMA algorithm. Considering how trivial the differences are between the generic block updates for ADMM (\ref{eq:admm_updates}) and AMA (\ref{eq:ama_updates}), we were surprised by the performance gap between them. In the convex clustering problem, however, there is a non-trivial difference between minimizing the augmented and unaugmented Lagrangian in the first block update. This task can be accomplished in less time and space by AMA.

Two features of the convex clustering problem make it an especially good candidate for solution by AMA. First, the objective function is strongly convex and therefore has a Lipschitz differentiable dual. Lipschitz differentiability is a standard condition ensuring the convergence of proximal gradient algorithms. For this reason \As{Tseng} (b) invokes strong convexity. Second, a good step size can be readily computed from the Laplacian matrix generated by the edge set $\mathcal{E}$. Without this prior bound, we would have to employ a more complicated line-search.

Our complexity analysis and simulations show that the accelerated AMA method appears to be the algorithm of choice. Nonetheless, given that alternative variants of ADMM may close the performance gap \citep{DenYin2012,GolMaSch2012}, we are reluctant to dismiss ADMM too quickly. Both algorithms deserve further investigation. For instance, in both ADMM and AMA, updates of $\bLambda$ and $\bV$ could be parallelized. Hocking et al.\@ also employed an active set approach to reduce computations as the centroids coalesce. A similar strategy could be adopted in our framework, but it incurs additional overhead as checks for fission events have to be introduced. An interesting and practical question brought up by Hocking et al.\@ remains open, namely under what conditions or weights are fusion events guaranteed to be permanent as $\gamma$ increases? In all our experiments, we did not observe any fission events. Identifying those conditions would eliminate the need to check for fission in such cases and expedite computation.

For AMA, the storage demands and computational complexity of convex clustering depend quadratically on the number of edges of the associated weight graph in the worst case. Limiting a point's connections to its $k$-nearest neighbors, for example, ensures that the number of edges in the graph is linear in the number of nodes in the graph. Eliminating long-range dependencies is often desirable anyway. Choosing sparse weights can improve both cluster quality and computational efficiency. 
Moreover, finding the exact $k$-nearest neighbors is likely not essential, and we conjecture that the quality of solutions would not suffer greatly if approximate nearest neighbors are used and algorithms for fast computation of approximately nearest neighbors are leveraged \citep{SlaCas2008}. On very large problems, the best strategy might be to exploit the continuity of solution paths in the weights. This suggests starting with even sparser graphs than the desired one and generating a sequence of solutions to increasingly dense problems. A solution with fewer edges can serve as a warm start for the next problem with more edges.

The splitting perspective also invites extensions that impose structured sparsity on the centroids. 
Witten and Tibshirani \citep{WitTib2010} discuss how sparse centroids can improve the quality of a solution, especially when only a relatively few features of data drive clustering. Structured sparsity can be accomplished by adding a sparsity-inducing norm penalty to the $\bU$ updates. The update for the centroids for both AMA and ADMM then rely on another proximal map of a gradient step. Introducing a sparsifying norm, however, raises the additional complication of choosing the amount of penalization.

Except for a few hints about weights, our analysis leaves the topic of optimal clustering untouched. Recently, \citet{Lux2010} suggested some principled approaches to assessing the quality of a clustering assignment via data perturbation and resampling. These clues are worthy of further investigation.

\appendix

\bigskip
\begin{center}
{\large\bf SUPPLEMENTAL MATERIALS}
\end{center}

\begin{description}

\item[Proofs and derivations:] The Supplemental Materials include proofs for Propositions~\ref{prop:solution_path_continuity} and \ref{prop:coalesce}, details on the stopping criterion for our ADMM algorithm, and the derivation of the dual function (\ref{eq:dual_function}). (Supplement.pdf)

\item[Code:] An R package, {\bf cvxclustr}, which implements the AMA and ADMM algorithms in this paper, is available on the CRAN website.

\end{description}

\begin{center}
{\large\bf ACKNOWLEDGMENTS}
\end{center}
The authors thank Jocelyn Chi, Daniel Duckworth, Tom Goldstein, Rob Tibshirani, and Genevera Allen for their helpful comments and suggestions. All plots were made
using the open source R package ggplot2 \citep{Wickham2009}. This research was supported by the NIH United States Public Health Service grants
GM53275 and HG006139

\bibliographystyle{asa}
\bibliography{SONCluster}

\begin{thebibliography}{66}
\newcommand{\enquote}[1]{``#1''}
\expandafter\ifx\csname natexlab\endcsname\relax\def\natexlab#1{#1}\fi

\bibitem[{Aloise et~al.(2009)Aloise, Deshpande, Hansen, and
  Popat}]{AloDesHan2009}
Aloise, D., Deshpande, A., Hansen, P., and Popat, P. (2009),
  \enquote{{NP}-hardness of {E}uclidean sum-of-squares clustering,}
  \textit{Machine Learning}, 75, 245--248.

\bibitem[{{Americans for Democratic Action}(2002)}]{Dem2002}
{Americans for Democratic Action} (2002), \enquote{2001 voting record:
  Shattered promise of liberal progress,} \textit{ADA Today}, 57, 1--17.

\bibitem[{Anderson and Morley(1985)}]{AndMor1985}
Anderson, W.~N. and Morley, T.~D. (1985), \enquote{Eigenvalues of the
  {L}aplacian of a graph,} \textit{Linear and Multilinear Algebra}, 18,
  141--145.

\bibitem[{Arthur and Vassilvitskii(2007)}]{ArtVas2007}
Arthur, D. and Vassilvitskii, S. (2007), \enquote{k-means++: {T}he advantages
  of careful seeding,} in \textit{Proceedings of the eighteenth annual ACM-SIAM
  symposium on Discrete algorithms}, Philadelphia, PA, USA: Society for
  Industrial and Applied Mathematics, SODA '07, pp. 1027--1035.

\bibitem[{Beck and Teboulle(2009)}]{BecTeb2009}
Beck, A. and Teboulle, M. (2009), \enquote{A Fast Iterative
  Shrinkage-Thresholding Algorithm for Linear Inverse Problems,} \textit{SIAM
  Journal on Imaging Sciences}, 2, 183--202.

\bibitem[{Boyd et~al.(2011)Boyd, Parikh, Chu, Peleato, and
  Eckstein}]{BoyParChu2011}
Boyd, S., Parikh, N., Chu, E., Peleato, B., and Eckstein, J. (2011),
  \enquote{Distributed Optimization and Statistical Learning via the
  Alternating Direction Method of Multipliers,} \textit{Found. Trends Mach.
  Learn.}, 3, 1--122.

\bibitem[{Bradley et~al.(1997)Bradley, Mangasarian, and Street}]{BraManStr1997}
Bradley, P.~S., Mangasarian, O.~L., and Street, W.~N. (1997),
  \enquote{Clustering via Concave Minimization,} in \textit{Advances in Neural
  Information Processing Systems}, MIT Press, pp. 368--374.

\bibitem[{Chen et~al.(1998)Chen, Donoho, and Saunders}]{CheDonSau1998}
Chen, S.~S., Donoho, D.~L., and Saunders, M.~A. (1998), \enquote{Atomic
  Decomposition by Basis Pursuit,} \textit{SIAM Journal on Scientific
  Computing}, 20, 33--61.

\bibitem[{Combettes and Wajs(2005)}]{ComWaj2005}
Combettes, P. and Wajs, V. (2005), \enquote{Signal Recovery by Proximal
  Forward-Backward Splitting,} \textit{Multiscale Modeling \& Simulation}, 4,
  1168--1200.

\bibitem[{{CVX Research, Inc.}(2012)}]{CVX2012}
{CVX Research, Inc.} (2012), \enquote{{CVX}: Matlab Software for Disciplined
  Convex Programming, version 2.0 beta,} \url{http://cvxr.com/cvx}.

\bibitem[{Dasgupta and Freund(2009)}]{DasFre2009}
Dasgupta, S. and Freund, Y. (2009), \enquote{Random projection trees for vector
  quantization,} \textit{IEEE Trans. Inf. Theor.}, 55, 3229--3242.

\bibitem[{de~Leeuw and Mair(2009)}]{LeeMai2009}
de~Leeuw, J. and Mair, P. (2009), \enquote{Gifi Methods for Optimal Scaling in
  {R: The Package homals},} \textit{Journal of Statistical Software}, 31,
  1--21.

\bibitem[{Deng and Yin(2012)}]{DenYin2012}
Deng, W. and Yin, W. (2012), \enquote{On the Global and Linear Convergence of
  the Generalized Alternating Direction Method of Multipliers,} Tech. Rep. CAAM
  Technical Report TR12-14, Rice University.

\bibitem[{Duchi et~al.(2008)Duchi, Shalev-Shwartz, Singer, and
  Chandra}]{DucShaSin2008}
Duchi, J., Shalev-Shwartz, S., Singer, Y., and Chandra, T. (2008),
  \enquote{Efficient Projections onto the $\ell_1$-Ball for Learning in High
  Dimensions,} in \textit{Proceedings of the International Conference on
  Machine Learning}.

\bibitem[{Eckstein and Bertsekas(1992)}]{EckBer1992}
Eckstein, J. and Bertsekas, D.~P. (1992), \enquote{On the {Douglas-Rachford}
  splitting method and the proximal point algorithm for maximal monotone
  operators,} \textit{Mathematical Programming}, 55, 293--318.

\bibitem[{Elkan(2003)}]{Elk2003}
Elkan, C. (2003), \enquote{Using the Triangle Inequality to Accelerate
  $k$-Means,} in \textit{Proceedings of ICML 2003}.

\bibitem[{Ferguson(1973)}]{Fer1973}
Ferguson, T.~S. (1973), \enquote{A {B}ayesian Analysis of Some Nonparametric
  Problems,} \textit{Annals of Statistics}, 1, 209--230.

\bibitem[{Fisher(1936)}]{Fis1936}
Fisher, R.~A. (1936), \enquote{The Use of Multiple Measurements in Taxonomic
  Problems,} \textit{Annals of Eugenics}, 7, 179--188.

\bibitem[{Forgy(1965)}]{For1965}
Forgy, E. (1965), \enquote{Cluster analysis of multivariate data: efficiency
  versus interpretability of classifications,} \textit{Biometrics}, 21,
  768--780.

\bibitem[{Fraley(1998)}]{Fra1998}
Fraley, C. (1998), \enquote{Algorithms for Model-Based Gaussian Hierarchical
  Clustering,} \textit{SIAM Journal on Scientific Computing}, 20, 270--281.

\bibitem[{Frank and Wolfe(1956)}]{FraWol1956}
Frank, M. and Wolfe, P. (1956), \enquote{An algorithm for quadratic
  programming,} \textit{Naval Research Logistics Quarterly}, 3, 95--110.

\bibitem[{Gabay(1983)}]{Gab1983}
Gabay, D. (1983), \textit{Augmented Lagrangian Methods: Applications to the
  Solution of Boundary-Value Problems}, Amsterdam: North-Holland, chap.
  Applications of the method of multipliers to variational inequalities.

\bibitem[{Gabay and Mercier(1976)}]{GabMer1976}
Gabay, D. and Mercier, B. (1976), \enquote{A dual algorithm for the solution of
  nonlinear variational problems via finite-element approximations,}
  \textit{Comp. Math. Appl.}, 2, 17--40.

\bibitem[{Glowinski and Marrocco(1975)}]{GloMar1975}
Glowinski, R. and Marrocco, A. (1975), \enquote{Sur lapproximation par elements
  finis dordre un, et la resolution par penalisation-dualite dune classe de
  problemes de Dirichlet nonlineaires,} \textit{Rev. Francaise dAut. Inf. Rech.
  Oper.}, 2, 41--76.

\bibitem[{Goldfarb et~al.(2012)Goldfarb, Ma, and Scheinberg}]{GolMaSch2012}
Goldfarb, D., Ma, S., and Scheinberg, K. (2012), \enquote{Fast alternating
  linearization methods for minimizing the sum of two convex functions,}
  \textit{Mathematical Programming}, 1--34.

\bibitem[{Goldstein et~al.(2012)Goldstein, O'Donoghue, and
  Setzer}]{GolODSet2012}
Goldstein, T., O'Donoghue, B., and Setzer, S. (2012), \enquote{Fast Alternating
  Direction Optimization Methods,} Tech. Rep. cam12-35, University of
  California, Los Angeles.

\bibitem[{Goldstein and Osher(2009)}]{GolOsh2009}
Goldstein, T. and Osher, S. (2009), \enquote{The Split {Bregman} Method for
  L1-Regularized Problems,} \textit{SIAM Journal on Imaging Sciences}, 2,
  323--343.

\bibitem[{Gordon(1999)}]{Gor1999}
Gordon, A. (1999), \textit{Classification}, London: Chapman and Hall/CRC Press,
  2nd ed.

\bibitem[{Gower and Ross(1969)}]{GowRos1969}
Gower, J.~C. and Ross, G. J.~S. (1969), \enquote{Minimum Spanning Trees and
  Single Linkage Cluster Analysis,} \textit{Journal of the Royal Statistical
  Society. Series C (Applied Statistics)}, 18, 54--64.

\bibitem[{Grant and Boyd(2008)}]{GraBoy2008}
Grant, M. and Boyd, S. (2008), \enquote{Graph implementations for nonsmooth
  convex programs,} in \textit{Recent Advances in Learning and Control}, eds.
  Blondel, V., Boyd, S., and Kimura, H., Springer-Verlag Limited, Lecture Notes
  in Control and Information Sciences, pp. 95--110,
  \url{http://stanford.edu/~boyd/graph_dcp.html}.

\bibitem[{Hartigan(1975)}]{Har1975}
Hartigan, J. (1975), \textit{Clustering Algorithms}, New York: Wiley.

\bibitem[{He and Yuan(2012)}]{HeYua2012}
He, B. and Yuan, X. (2012), \enquote{On the $O(1/n)$ Convergence Rate of the
  DouglasÐRachford Alternating Direction Method,} \textit{SIAM Journal on
  Numerical Analysis}, 50, 700--709.

\bibitem[{Hestenes(1969)}]{Hes1969}
Hestenes, M. (1969), \enquote{Multiplier and gradient methods,} \textit{Journal
  of Optimization Theory and Applications}, 4, 303--320.

\bibitem[{Hocking et~al.(2011)Hocking, Vert, Bach, and Joulin}]{HocVerBac2011}
Hocking, T., Vert, J.-P., Bach, F., and Joulin, A. (2011),
  \enquote{Clusterpath: an Algorithm for Clustering using Convex Fusion
  Penalties,} in \textit{Proceedings of the 28th International Conference on
  Machine Learning (ICML-11)}, eds. Getoor, L. and Scheffer, T., New York, NY,
  USA: ACM, ICML '11, pp. 745--752.

\bibitem[{Hoefling(2010)}]{Hoe2010}
Hoefling, H. (2010), \enquote{A Path Algorithm for the Fused Lasso Signal
  Approximator,} \textit{Journal of Computational and Graphical Statistics},
  19, 984--1006.

\bibitem[{Johnson(1967)}]{Joh1967}
Johnson, S. (1967), \enquote{Hierarchical clustering schemes,}
  \textit{Psychometrika}, 32, 241--254.

\bibitem[{Kaufman and Rousseeuw(1990)}]{KauRou1990}
Kaufman, L. and Rousseeuw, P. (1990), \textit{Finding Groups in Data: An
  Introduction to Cluster Analysis}, New York: Wiley.

\bibitem[{Lance and Williams(1967)}]{LanWil1967}
Lance, G.~N. and Williams, W.~T. (1967), \enquote{A General Theory of
  Classificatory Sorting Strategies: 1. Hierarchical Systems,} \textit{The
  Computer Journal}, 9, 373--380.

\bibitem[{Lindsten et~al.(2011)Lindsten, Ohlsson, and Ljung}]{LinOhlLju2011}
Lindsten, F., Ohlsson, H., and Ljung, L. (2011), \enquote{Just Relax and Come
  Clustering! {A} ConvexiÞcation of k-Means Clustering,} Tech. rep.,
  Link\"{o}pings universitet.

\bibitem[{Lloyd(1982)}]{Llo1982}
Lloyd, S. (1982), \enquote{Least squares quantization in {PCM},}
  \textit{Information Theory, IEEE Transactions on}, 28, 129 -- 137.

\bibitem[{Luxburg(2007)}]{Lux2007}
Luxburg, U. (2007), \enquote{A tutorial on spectral clustering,}
  \textit{Statistics and Computing}, 17, 395--416.

\bibitem[{MacQueen(1967)}]{Mac1967}
MacQueen, J. (1967), \enquote{Some methods for classification and analysis of
  multivariate observations,} in \textit{Proc. Fifth Berkeley Symp. on Math.
  Statist. and Prob.}, Univ. of Calif. Press, vol.~1, pp. 281--297.

\bibitem[{McLachlan(2000)}]{McL2000}
McLachlan, G. (2000), \textit{Finite Mixture Models}, Hoboken, New Jersey:
  Wiley.

\bibitem[{Michelot(1986)}]{Mic1986}
Michelot, C. (1986), \enquote{{A finite algorithm for finding the projection of
  a point onto the canonical simplex of $\Real^n$},} \textit{Journal of
  Optimization Theory and Applications}, 50, 195--200.

\bibitem[{Miller(1987)}]{Mil1987}
Miller, K.~S. (1987), \textit{Some eclectic matrix theory}, {Robert E. Krieger
  Publishing Company, Inc.}

\bibitem[{Mirkin(1996)}]{Mir1996}
Mirkin, B. (1996), \textit{Mathematical Classification and Clustering},
  Dordrecht, The Netherlands: Kluwer Academic Publishers.

\bibitem[{Murtagh(1983)}]{Mur1983}
Murtagh, F. (1983), \enquote{A Survey of Recent Advances in Hierarchical
  Clustering Algorithms,} \textit{The Computer Journal}, 26, 354--359.

\bibitem[{Neal(2000)}]{Nea2000}
Neal, R.~M. (2000), \enquote{Markov Chain Sampling Methods for {D}irichlet
  Process Mixture Models,} \textit{Journal of Computational and Graphical
  Statistics}, 9, pp. 249--265.

\bibitem[{Nocedal and Wright(2006)}]{NocWri2006}
Nocedal, J. and Wright, S. (2006), \textit{Numerical Optimization}, Springer,
  2nd ed.

\bibitem[{Powell(1969)}]{Pow1969}
Powell, M. (1969), \enquote{A method for nonlinear constraints in minimization
  problems,} in \textit{Optimization}, ed. Fletcher, R., Academic Press, pp.
  283--298.

\bibitem[{Rasmussen(2000)}]{Ras2000}
Rasmussen, C.~E. (2000), \enquote{The Infinite {G}aussian Mixture Model,} in
  \textit{In Advances in Neural Information Processing Systems 12}, MIT Press,
  pp. 554--560.

\bibitem[{Rockafellar(1973)}]{Roc1973}
Rockafellar, R. (1973), \enquote{The multiplier method of {H}estenes and
  {P}owell applied to convex programming,} \textit{Journal of Optimization
  Theory and Applications}, 12, 555--562.

\bibitem[{Slaney and Casey(2008)}]{SlaCas2008}
Slaney, M. and Casey, M. (2008), \enquote{Locality-Sensitive Hashing for
  Finding Nearest Neighbors [Lecture Notes],} \textit{Signal Processing
  Magazine, IEEE}, 25, 128--131.

\bibitem[{Sneath(1957)}]{Sne1957}
Sneath, P. H.~A. (1957), \enquote{The Application of Computers to Taxonomy,}
  \textit{Journal of General Microbiology}, 17, 201--226.

\bibitem[{S{\o}rensen(1948)}]{So1948}
S{\o}rensen, T. (1948), \enquote{A method of establishing groups of equal
  amplitude in plant sociology based on similarity of species and its
  application to analyses of the vegetation on Danish commons,}
  \textit{Biologiske Skrifter}, 5, 1--34.

\bibitem[{Tibshirani(1996)}]{Tib1996}
Tibshirani, R. (1996), \enquote{Regression Shrinkage and Selection via the
  Lasso,} \textit{Journal of the Royal Statistical Society, Ser. B}, 58,
  267--288.

\bibitem[{Tibshirani et~al.(2005)Tibshirani, Saunders, Rosset, Zhu, and
  Knight}]{TibSauRos2005}
Tibshirani, R., Saunders, M., Rosset, S., Zhu, J., and Knight, K. (2005),
  \enquote{Sparsity and smoothness via the fused lasso,} \textit{Journal of the
  Royal Statistical Society: Series B (Statistical Methodology)}, 67, 91--108.

\bibitem[{Tibshirani and Taylor(2011)}]{TibTay2011}
Tibshirani, R.~J. and Taylor, J. (2011), \enquote{The solution path of the
  generalized lasso,} \textit{Annals of Statistics}, 39, 1335--1371.

\bibitem[{Titterington et~al.(1985)Titterington, Smith, and
  Makov}]{TitSmiMak1985}
Titterington, D.~M., Smith, A. F.~M., and Makov, U.~E. (1985),
  \textit{Statistical Analysis of Finite Mixture Distributions}, Hoboken, New
  Jersey: John Wiley \& Sons.

\bibitem[{Tropp(2006)}]{Tro2006}
Tropp, J. (2006), \enquote{Just relax: {C}onvex programming methods for
  identifying sparse signals in noise,} \textit{IEEE Transactions on
  Information Theory}, 52, 1030 --1051.

\bibitem[{Tseng(1991)}]{Tse1991}
Tseng, P. (1991), \enquote{Applications of a Splitting Algorithm to
  Decomposition in Convex Programming and Variational Inequalities,}
  \textit{SIAM Journal on Control and Optimization}, 29, 119--138.

\bibitem[{von Luxburg(2010)}]{Lux2010}
von Luxburg, U. (2010), \enquote{Clustering Stability: {An} Overview,}
  \textit{Found. Trends Mach. Learn.}, 2, 235--274.

\bibitem[{Ward(1963)}]{War1963}
Ward, J.~H. (1963), \enquote{Hierarchical Grouping to Optimize an Objective
  Function,} \textit{Journal of the American Statistical Association}, 58,
  236--244.

\bibitem[{Wickham(2009)}]{Wickham2009}
Wickham, H. (2009), \textit{ggplot2: Elegant Graphics for Data Analysis},
  Springer New York.

\bibitem[{Witten and Tibshirani(2010)}]{WitTib2010}
Witten, D.~M. and Tibshirani, R. (2010), \enquote{A framework for feature
  selection in clustering,} \textit{J Am Stat Assoc.}, 105, 713--726.

\bibitem[{Wu and Wunsch(2009)}]{WuWun2009}
Wu, R. and Wunsch, D. (2009), \textit{Clustering}, Hoboken: John Wiley \& Sons.

\end{thebibliography}

\end{document}